\NewDocumentCommand{\ceil}{s O{} m}{%
  \IfBooleanTF{#1} 
    {\left\lceil#3\right\rceil} 
    {#2\lceil#3#2\rceil} 
}
\DeclareMathOperator{\argmax}{argmax}
\newcommand{\st}{\text{s.t.}}
\newtheorem{theorem}{Theorem}
\newtheorem{remark}[theorem]{Remark}
\newtheorem{lemma}[theorem]{Lemma}
\newcommand{\rev}[1]{{\color{blue}#1}} 
\newcommand{\com}[1]{\textbf{\color{red}(COMMENT: #1)}} 
\newcommand{\clar}[1]{\textbf{\color{green}(NEED CLARIFICATION: #1)}}
\newcommand{\response}[1]{\textbf{\color{magenta}(RESPONSE: #1)}} 
\newcommand{\rev}[1]{#1}
\newcommand{\com}[1]{}
\newcommand{\clar}[1]{}
\newcommand{\response}[1]{}
\newfont{\mycrnotice}{ptmr8t at 7pt}
\newfont{\myconfname}{ptmri8t at 7pt}
\begin{document}
\setlength{\pdfpagewidth}{8.5in}
\setlength{\pdfpageheight}{11in}
\title{An Online Approach to Dynamic Channel Access and Transmission Scheduling}

\author{Yang Liu, Mingyan Liu\\
Department of Electrical Engineering and Computer Science \\
University of Michigan, Ann Arbor\\
Email: \{youngliu,mingyan\}@eecs.umich.edu\\}

%

\maketitle

\begin{abstract}
Making judicious channel access and transmission scheduling decisions is essential for improving performance (delay, throughput, etc.) as well as energy and spectral efficiency in multichannel wireless systems.  This problem has been a subject of extensive study in the past decade, and the resulting dynamic and opportunistic channel access schemes can bring potentially significant improvement over traditional schemes.  However, a common and severe limitation of these dynamic schemes is that they almost always require some form of a priori knowledge of the channel statistics. A natural remedy is a learning framework, which has also been extensively studied in the same context, but a typical learning algorithm in this literature seeks only the best static policy (i.e., to stay in the best channel), with performance measured by {\em weak regret}, rather than learning a good dynamic channel access policy.  There is thus a clear disconnect between what an optimal channel access policy can achieve with known channel statistics that actively exploits temporal, spatial and spectral diversity, and what a typical existing learning algorithm aims for, which is the static use of a single channel devoid of diversity gain.
In this paper we bridge this gap by designing learning algorithms that track known optimal or sub-optimal dynamic channel access and transmission scheduling policies, thereby yielding performance measured by a form of {\em strong regret}, the accumulated difference between the reward returned by an optimal solution when a priori information is available and that by our online algorithm. We do so in the context of two specific algorithms that appeared in \cite{Zheng:2009:DOS:1669334.1669354} and \cite{Chang07optimalchannel}, respectively, the former for a multiuser single-channel setting and the latter for a single-user multichannel setting. In both cases we show that our algorithms achieve sub-linear regret uniform in time and outperforms the standard weak-regret learning algorithms. 

\end{abstract}

\category{F.1.2}{Modes of Computation}{Online Computation}
\category{C.2.1}{Network Architecture and Design}{Wireless Communication}\category{G.3}{Probability and Statistics}{Distribution Functions}

\terms{Algorithm, Design, Performance, Theory}

\keywords{Dynamic and opportunistic spectrum access, stopping rule, multi-armed bandit, regret learning}

\section{Introduction}

Making judicious channel access and transmission scheduling decisions is essential for improving performance (delay, throughput, etc.) as well as energy and spectral efficiency in wireless systems, especially those consisting of multiple users and multiple channels.  Such decisions are often non-trivial because of the time-varying nature of the wireless channel condition, which further varies across different users and different spectrum bands.  Such temporal, spatial and spectral diversity provide opportunities for a radio transceiver to exploit for performance gain and the past decade has seen many research advances in this area. For instance, a transmitter can seek the best channel through channel sensing before transmission, see e.g.,  \cite{10.1109/BROADNETS.2004.46, Shu:2009:TSC:1614320.1614325,Sabharwal06opportunisticspectral} for such dynamic multi-channel MAC schemes that allow transmitters to opportunistically switch between channels in search of good instantaneous channel condition; if a transmitter consistently selects a channel with better instantaneous condition (e.g., higher instantaneous received SNR) from a set of channels, then over time it sees (potentially much) higher average rate \cite{Liu03opportunisticfair, 900644,916290}.  Similarly, a transmitter can postpone transmission if the sensed instantaneous condition is poor in hopes of better condition later, see e.g., \cite{Zheng:2009:DOS:1669334.1669354} for stopping rule based sequential channel sensing policies, in the single-user multichannel and single-channel multiuser scenarios, respectively.  Variations on the same theme include \cite{tan2010distributed} where a distributed opportunistic scheduling problem under delay constraints is investigated, and \cite{Chang07optimalchannel} where a generalized stopping rule is developed for the single-user multichannel setting. 

These dynamic channel access schemes (both optimal and sub-optimal) improve upon traditional schemes such as channel splitting \cite{1208698,1285260}, multi-channel CSMA \cite{111441}, and multi-rate systems  \cite{Kamerman1997}.  
However, a common and severe limitation of these dynamic schemes is that they almost always require some form of a priori knowledge of the channel statistics.  For instance, a typical assumption is that the channel conditions evolve as an IID process and that its distribution for each channel is known to the transmitter/user, see e.g., \cite{Sabharwal06opportunisticspectral, Zheng:2009:DOS:1669334.1669354, Chang07optimalchannel}.  While in some limited setting such information may be acquired with accuracy and low latency, this assumption does not generally hold.  Furthermore, the channel statistics may be time-varying, in which case such an assumption can only be justified if there exists a separate channel sampling process which keeps the assumed channel statistics information up to date.   


To relax such an assumption, it is therefore natural to cast the dynamic channel sensing and transmission scheduling problem in a learning context, where the user is not required to possess a priori channel statistics but will try to learn as actions are taken and observations  are made.  Within this context, the type of {\em online} learning or {\em regret} learning, also often referred to as the Multi-Armed Bandit (MAB) \cite{LR85,Anantharam:M86/62,Auer:2002:FAM:599614.599677} framework, is particularly attractive, as it allows a user to optimize its performance throughout its learning process.  
%
For this reason, this learning framework has also been extensively studied within the context of multichannel dynamic spectrum access, see e.g., \cite{tekin2011online} for single-user and \cite{6415693,liu2011learning} for multiuser settings. 
However, in most of this literature, the purpose of the learning algorithm is for a transmitter to find the best channel in terms of its average condition and then use this channel for transmission majority of the time.  It follows that the performance of such learning algorithms is measured by {\em weak regret}, the difference between a learning algorithm and the best single-action policy which in this context is to always use the channel with the best average condition.  Accordingly, the key ingredient  in these algorithms is to form accurate estimates on the average condition for each channel.

We therefore see a clear disconnect between what an optimal channel access policy can achieve with known channel statistics (e.g., by employing a stopping rule based algorithm) that actively exploits temporal, spatial and spectral diversity, and what a typical existing learning algorithm aims for, i.e., essentially the static use of a single channel, which unfortunately completely eliminates the utilization of diversity gain\footnote{Some multiuser learning algorithms attempts to separate users into different channels, so do exploit to some degree the multiuser diversity gain, see e.g., \cite{tekin2012performance}.}.  

Our goal is to bridge this gap and seek to design learning algorithms that instead of trying to track the best average-condition channel, attempt to track a known optimal or sub-optimal channel access and transmission scheduling algorithm, thereby yielding performance measured by a form of {\em strong regret}.  Our presentation and analysis strongly suggest that such learning algorithms may be constructed in a much broader context, i.e., they can be made to track any prescribed policy and not just those cited earlier or even limited to the dynamic spectrum access context.  However, to make our discussion concrete, we shall present our results in the context of specific channel sensing and access algorithms.  

Specifically, we present the general framework of such a learning algorithm, followed by the detailed instances designed to track the stopping rule policies given in \cite{Zheng:2009:DOS:1669334.1669354} and \cite{Chang07optimalchannel}, respectively.  
The choice of these two algorithms is not an arbitrary one.  Our intention is to use two representatives to capture a fairly wide array of similar algorithms of this kind.  The stopping rule algorithm in  \cite{Zheng:2009:DOS:1669334.1669354} is a relatively simple one, designed for multiple users competing for access to a single channel; it exploits temporal and spatial (multiuser) diversity, the idea being for a user to defer transmission if it perceives poor channel quality thereby giving the opportunity to another user with better conditions.  The stopping rule algorithm in \cite{Chang07optimalchannel}, on the other hand, is much more complex in construction; it is designed for a single user with access to multiple channels by exploiting spectral and temporal diversity, the idea being to find the channel with the best instantaneous condition. Both algorithms assume that channel qualities evolve in an IID fashion with known probability distributions, though different channels may have different statistics \cite{Chang07optimalchannel}; and both are provably optimal (or near-optimal) under mild technical conditions. 
For other stopping-rule based policies see also \cite{Sabharwal06opportunisticspectral,10.1109/BROADNETS.2004.46,Shu:2009:TSC:1614320.1614325}. We show that in both cases our algorithms achieve a sub-linear accumulative strong regret (against their respective reference algorithms from \cite{Zheng:2009:DOS:1669334.1669354} and \cite{Chang07optimalchannel}), thus achieving zero-regret averaged over time. \rev{ In this paper, we do not consider interferences from multiple users, that is we consider cases with either a single user or non-strategic and collaborative users. It is however another interesting direction of applying regret learning results to scheduling problems. In such case, adversarial models will be needed to capture the effects of interference when multiple transmitters present in the system. In particular, in \cite{asgeirsson2011game} Asgeirsson et al. studied a capacity maximization problem in distributed wireless network under SINR interference model and show a constant factor approximation bound compared to the global optimum is achievable. In \cite{dams:spaa12} Dams et al. proposed scheduling algorithms for a similar problem but under Reyleigh-fading interference models and show a logarithmic order approximation. Then in a later work \cite{dams2014jamming}, the same authors extend their results to when there exists adversarial jammer. }

The rest of the paper is organized as follows. Problem formulation is presented in Section \ref{sec:pf}, and the two reference optimal offline algorithms in Section \ref{sec:off}. We present our online learning algorithms in Section \ref{sec:online} with performance analysis given in Section \ref{sec:regret}.  Numerical results are given in Sections \ref{sec:simulation} and \rev{we discuss several possible extensions of our work in Section \ref{sec:discuss}}. Section \ref{sec:conclude} concludes the paper.

\section{Problem formulation}\label{sec:pf}

In this section we present two system models and their corresponding transmission scheduling problems. This lays the foundation for us to introduce the two offline optimal stopping-rule policies from \cite{Zheng:2009:DOS:1669334.1669354} and \cite{Chang07optimalchannel}, respectively in Section \ref{sec:off}; these are the policies our learning algorithm presented in Section \ref{sec:online} aims to track.  

\subsection{Model I: multiuser, single-channel}

Under the first model (studied in \cite{Zheng:2009:DOS:1669334.1669354}), there is a finite number of users/transmitters, indexed by the set $\mathcal M = \{1,2,...,M\}$, $M \geq 1$, and a single channel.  The system works in discrete time slots indexed by $n=1,2, \cdots$. Denote the channel quality by $X(n)$, $n=1,2,\cdots$. This quantity measures how good a channel is; for example, $X(n)$ could model the Signal-Noise-Ratio (SNR) for the channel at time $n$. At time $n$, if no one is transmitting on the channel, a user $i \in \mathcal M$ attempts to access with probability $0\leq p_i\leq 1$ by sending a carrier sensing packet. A carrier sensing period takes a constant amount of time denoted by $\zeta$ (slots). The contention resolution is done by random access, i.e. an access attempt is successful with probability
$
p_s = \sum_{i\in \mathcal M} p_i \cdot \prod_{j \neq i} (1-p_i),
$
when there is only one user attempting access. Denote the random contention time between two successful accesses by $\eta$; it follows that $E[\eta] = \zeta/p_s$ (slots). We assume the process $\{X(n_k)\}_{k=1,2,...,}$ forms an IID process, where $n_k$ is the time the $k$-th contention succeeds. That is we assume the samples collected at successful accesses are generated in an IID fashion (as assumed and argued in \cite{Zheng:2009:DOS:1669334.1669354}).  Upon a collision, the current slot will be abandoned and users re-compete in the next time slot. On the other hand, users keep silent if there is an active transmission on the channel. For simplicity it is assumed that $X(n)$ stays unchanged during each transmission, which may be justified if transmission times are kept on a smaller time scale than channel coherence times \cite{zheng2007channel}.
 Once a user gains access right (and sees the channel quality $X(n)$), it has two options: 
\begin{itemize}
\item access the channel right away for $K$ \rev{time slots} (\emph{stop}); or 
\item give up the access opportunity, release the channel for all users to re-compete  (\emph{continue}).  
\end{itemize}
This can be more formally stated as an optimal stopping rule (OSR) problem: users decide at which time to stop the decision process and use the channel. There are a number of variations of this problem with slightly different model, see e.g., \cite{Shu:2009:TSC:1614320.1614325}. The idea is when the channel quality is poor, a user would give up the transmission opportunity so that it is more likely that a user with better perceived channel quality will get to use it.  Denote the stopping time by $\tau$, then the objective is to design a stopping rule for all users so as to maximize the rate-of-return, which is the effective data rate for each successful access (\cite{Zheng:2009:DOS:1669334.1669354})
\begin{align}
J^*_{I} = \max_{\tau \in \Pi} J^{\tau}_{I} &= \max_{\tau \in \Pi} E\biggl[\frac{X(\sum_{k=1}^{\tau}\eta_k)\cdot K}{K_{\tau}}\biggl]\nonumber \\
& \underbrace{\longrightarrow}_{\text{Renewal theory}} \rev{\max_{\tau \in \Pi}} \frac{E[X(\sum_{k=1}^{\tau}\eta_k)\cdot K]}{E[K_{\tau}]}~,\label{obj2}
\end{align}
where $\Pi$ is the strategy space and $\eta_k$ is the $k$-th contention time and $K_{\tau} = \sum_{k=1}^{\tau}\eta_k + K$ is the total amount of time spent for each successful transmission. 

In this model, we regard the decision process between two consecutive successful transmissions (note that no successful transmission occurs if a user who wins access forgoes the transmission opportunity) as one \emph{meta stage}. Suppose there are all together $H$ meta stages (thus $H$ successful transmissions).  We define the following \emph{strong regret} performance measure, 
\begin{align*}
 R_{I}(H)&= \sup_{\mathbf{\tau} \in \Pi^H}  E^{\mathbf{\tau}}\biggl[ \sum_{l=1}^H \frac{X(\sum_{k=1}^{\tau_l}\eta_k)\cdot K}{K_{\tau_l}}\biggr]\nonumber \\
 &-  E^{\mathbf{\alpha}}\biggl[ \sum_{l=1}^H \frac{X(\sum_{k=1}^{\alpha_l}\eta_k)\cdot K}{K_{\alpha_l}} |\mathcal F_{l-1},...,\mathcal F_0\biggr]~.
\end{align*}
In above formulation, since channel conditions are IID over time, for each meta stage we restart the clock, i.e., we always set the first time slot for each meta stage as $n=1$. $\alpha_l$ is the stopping time for the $l$-th meta stage and $X(\sum_{k=1}^{\alpha_l}\eta_k)$ is the corresponding reward. Here we denote by the $\mathcal F_{l} := \cup_{j \in \mathcal M} \mathcal F^j_{l}$ the set of observations of channel qualities at meta stage $l$ ( with $\mathcal F^j_{l}$ for each user $j$).

\subsection{Model II: single-user, multichannel}

Under the second model (studied in  \cite{Chang07optimalchannel}), there is a finite number of channels, denoted and indexed by $\mathcal O = \{1,2,...,N\}$, each of which yields a non-negative reward when selected for transmission (e.g., throughput, delay etc). For any subset $S \subseteq \mathcal O$ we will use $\mathcal{O}-S$ to denote the set $\{j : j \in \mathcal O~\&~j \notin S\}$. There is one decision maker (user/transmitter) within the system. 
 The system again works in discrete time slots $n=1,2,...,\tau \leq N$; these however are much smaller time units than those under Model I because they are used only for channel sensing and not transmission. The user  sequentially chooses a set of channels to probe for their condition, stops at a stopping time $\tau$ using certain stopping rule, and selects a channel for transmission (over a period of time larger than a slot).  The decision process thus consists of determining in which sequence to sense the channels, when to stop, and which channel to use for transmission when stopping. 

For consistency we reuse the terminology \emph{meta stage} to describe the above decision process between $n=1$ and $\tau$; this will be referred to as one \emph{meta stage}.  Each time a new meta stage starts the clock is reset to $n=1$. The meta stages are indexed by $t=1,2,...,T$.  There is a period of transmission between two successive meta stages.  It is assumed that the channel condition remains constant within a single meta stage and forms an IID process over successive meta stages.  This is modeled by a reward $X_i$ (to generate $\{X_i(t)\}_t$) for the $i$-th channel given by a pdf $f_{X_i}(\cdot)$ and cdf $F_{X_i}(\cdot)$, respectively. 
Channels are independent of each other, i.e., a specific channel $i$'s realization $X_{i}(t;\omega)$ does not reveal any information for channels in $\mathcal O - \{i\}$.  



The transmitter is able to sense one channel (to observe $X_i$) at each decision step $n$ with a finite and constant sensing cost $c_i \geq 0$ for each channel $i$. 
The system works in the following way at each $n$ of meta stage $t$: The transmitter makes a decision between the following choices: 
\begin{itemize}
\item continues sensing; if this is the case then furthermore decide which channel to probe next  (\emph{sense});
\item stops sensing and proceeds to transmit (\emph{access}). Under this case there are two more options to choose from: 
\begin{itemize} 
	\item access the channel with the best observed instantaneous condition (\emph{access with recall});
	\item access the best channel (with highest expected reward) from the un-probed set without sensing (\emph{access with guess}).
\end{itemize}
\end{itemize} 

For the offline problem, due to the IID assumption on the channel condition, the decision strategy at each meta stage $t$ is the same. We thus suppress the time index $t$; 
the transmitter's objective is to choose the strategy that maximizes the collected reward minus the sum of probing costs: 
\begin{align}
J^*_{II} = \max_{\pi \in \Pi} J^{\pi}_{II}  = \max_{\pi \in \Pi} E \biggl[X_{\pi(\tau)}-\sum_{n=1}^{\tau-1} c_{\pi(n)} \biggr]~,\label{obj}
\end{align}
 where $\pi$ denotes a probing strategy and $\tau$ the stopping time. 
From \cite{Chang07optimalchannel}, it can be shown for time slots $n=1,2,...,\tau$ at any meta stage $t$, a sufficient information state is given by the pair $(x(n), S_n)$ where $S_n$ is the un-probed channel set and $x(n)$ is the highest observed reward among the set of probed channels $\mathcal O-S_n$. 
Let $V(x,S)$ denote the value function, the maximum expected remaining reward given the system state is $(x,S)$, the problem/decision process at the $n$-th decision step is equivalent to the following dynamic programming (DP) formulation\\[-15pt]
\begin{align}
V(x(n), S_n) = \max \biggl\{ \max_{j \in S_n}&\{-c_j + E[V(\max\{x(n), X_j\},S_n-j)]\}\nonumber \\
&, x(n), \max_{j \in S_n} E[X_j] \biggr\}~, 
\end{align}
where the three terms on the RHS correspond to the decision options sense, access with recall and access with guess, respectively. 

Our goal is to design an online algorithm $\alpha_t, t=1,2,...,T$ based on past observed history $\mathcal F_{t-1},...,\mathcal F_1$, so as to minimize the following strong regret measure,  
\begin{align}
 R_{II}(T)&= \sup_{\mathbf{\pi} \in \Pi^T}  E^{\mathbf{\pi}}\biggl[ \sum_{t=1}^T (X_{\pi_t(\tau)}(t)-\sum_{n=1}^{\tau-1} c_{\pi_t(n)})\biggr]\nonumber \\
 &-  E^{\mathbf{\alpha}}\biggl[ \sum_{t=1}^T (X_{\alpha_t(\tau)}(t)-\sum_{n=1}^{\tau-1} c_{\alpha_t(n)})|\mathcal F_{t-1},...,\mathcal F_0\biggr]~,
\end{align}
where $\pi_t$ is the optimal decision at meta stage $t$ when the information on $\{X_i\}_{i \in \mathcal O}$ is known and $\tau$ the stopping time; $\pi_t(n), n=1,2,...,\tau$ are the channels selected at decision step $n$ of each meta stage $t$. $\alpha_t$ is the decision actually made at $t$ by the user based on past observations when channel statistics is unknown. 

For both problems, if an algorithm can achieve regret $\frac{R_{I}(H)}{H}$ (respectively $\frac{R_{II}(T)}{T}$)  $\rightarrow 0$ then it is called sub-linear in total regret and zero-regret in time average (\rev{optimal asymptotically}). 

\section{Offline Solutions Revisited}\label{sec:off}

To be self-contained as well as to provide certain intuition for the design of the online algorithms, below we present the optimal offline solutions to the scheduling problems in Model I and Model II respectively. 
%
%

\subsection{Algorithm description: Model I}

The solution for the scheduling problem in Model I is surprisingly clean and elegant, and can be easily described as follows. Within each meta stage $l$, the optimal stopping rule is given by a threshold policy \rev{ \cite{Zheng:2009:DOS:1669334.1669354}}: 
 \begin{align}
 \tau^* = \min\{n \geq 1 : X(n) \geq x^*\}~,
 \end{align}
 where $x^*$ is given by the solution for $u$ in the following equation: 
 \begin{align}
 E[X(n)-u]^{+} = \frac{u\cdot \zeta}{p_s\cdot K}~.\label{optimalx}
 \end{align}
The corresponding algorithm is straightforward: at each $n$ when a user needs to make a decision, if $X(n) \geq x^*$, a  user will transmit and otherwise will release the channel. Intuitively this says that when the channel quality is sufficiently good (as compared to $x^*$ which separates the decision regions for stop and continue), a user should transmit. This algorithm will be referred to as \textbf{Offline\_MU} (MultiUser) in our subsequent discussion.

\subsection{Algorithm description: Model II}

The solution for Model II is much more involved;  
this is primarily due to it allowing \emph{access with guess} as an option, which is very different from classical stopping time problems.  In this sense this model presents a generalization. The optimal policy is shown to have three major steps in \cite{Chang07optimalchannel}: parameter calculation, sorting, and decision making, as detailed below.  

\noindent\rule{8.5cm}{1pt}
\textbf{STEP 1: Parameter calculation, $\forall j \in \mathcal O$}
\\[-5pt]
\noindent\rule{8.5cm}{1pt}
\vspace{-15pt}
\begin{align*}
a_j &= \min \{ u: u \geq E[X_j], c_j \geq E[\max (X_j - u, 0)]\}~,\\
b_j &= \max \{ u: u \leq E[X_j], c_j \geq E[\max (u - X_j, 0)]\}~.
\end{align*}
\\[-20pt]
\noindent\rule{8.5cm}{1pt}
~\\
\noindent\rule{8.5cm}{1pt}
\textbf{STEP 2: Channel sorting}
\\[-5pt]
\noindent\rule{8.5cm}{1pt}

1: Initialize $k=1,~S = \mathcal O$.

2: First compute $\mathcal R := \biggl \{ j \in S, a_j = \max_{i \in S} a_i \biggr\}~,$ and then $j^*$:
\begin{align*}
j^* = &\argmax_{j \in \mathcal R}\biggl\{ I_{b_j = a_j}\cdot E[X_j] + I_{a_j > b_j}\cdot\biggl[\nonumber \\
& E[X_j|X_j \geq a_j] - \frac{c_j}{P(X_j \geq a_j)}\biggr]\biggr\}~.
\end{align*}

3: Let $o_k = j^*$ (randomly select one if multiple $j^*$ exists) and set $k:=k+1$. $S = S- \{j^*\}$.
 
4: If $|S| \geq 1$, repeat \textbf{2}; o.w. return the sorted set $\{o_1,...,o_N\}$.

5: Relabel the sorted set as $\{1,2,...,N\}$.

\noindent\rule{8.5cm}{1pt}

\textbf{STEP 1} is based on a threshold property for the optimal policy proved in \cite{Chang07optimalchannel}. Intuitively speaking, $a,b$ separate the decision region as follows. 
A state larger than $a_j$ means further probing is not profitable whereas a state below $b_j$ suggests gain from continued sensing. 
For \textbf{STEP 2} we refer to each of its sub-steps \textbf{m} as \textbf{STEP 2.m} (we will re-use this numbering style in later discussions). The sorting process is straightforward: we start with the full set $\mathcal O$ and at each step we first calculate $\mathcal R$, the set of channels with the highest $a_j$. Then within $\mathcal R$ we further order the channels based on the one-step reward of probing channel $j$ when $x(n)=a_j$ and $j$ being the only remaining channel. The ordering repeats until all channels are in order. 

Given the current information state is $(x(n),S_n)$ at decision epoch $n$ and denoting by $d_s$ the solution to the following equation (solution is guaranteed to exist  \cite{Chang07optimalchannel}):
\begin{eqnarray*}
V(0,S_n) = -c_1 + E[V(\max\{d_s, X_1\}, S_n-\{1\})]~,
\end{eqnarray*} 
\noindent\rule{8.5cm}{1pt}
\textbf{STEP 3 : Decision Making}
\\[-5pt]
\noindent\rule{8.5cm}{1pt}

1: If $x(n) \geq \max_{i \in S_n}a_i$, stop and access the best sensed channel.

2: Otherwise if $x(n) > d_s$, probe the first channel in $S_n$.

3: If $x(n) \leq d_s$ consider the following sub-cases
~\\[-15pt]
\begin{flushleft}
$~~~~~~$(1) : If $b_1 \geq a_2$, then access/guess 1st channel (in $S_n$, w/o sensing).\\
$~~~~~~$(2) : If $b_2 \geq b_1$ or $g_1(0) \geq \max\{E[X_1],g_2(0)\}$, probe 1 in $S_n$.\\
$~~~~~~$(3) : There exists a unique $b_0$, where $b_1 > b_0 > b_2$ and $g_1(b_0) = \max\{E[X_1],g_2(0)\}$. If $x(n) \geq b_0 $ : probe 1st channel. $x(n)< b_0$: guess  channel 1 if $E[X_1] \geq g_2(0)$; probe channel 2 o.w.
\end{flushleft}
~\\[-20pt]
\noindent\rule{8.5cm}{1pt}
where 
$g_i(x) = - c_i + E[V(\max(X_i,x),-i+3)],~i=1,2$, 
and $g_1(x)$ is the expected reward of probing channel 1 facing information state $(x, \{1,2\})$ while $g_2(x)$ is the reward for probing channel 2. 

We denote the algorithm consisting of (\textbf{STEP 1, STEP 2, STEP 3}) as \textbf{Offline\_MC} (MultiChannel) and it serves as the offline benchmark solution for the multichannel scheduling problem. 

\section{Design of Online Algorithms} \label{sec:online}


We detail our online learning algorithm in this section. To generalize the discussion we shall refer to the users in Model I and the channels in Model II as \emph{units}. Then for a unifying framework of the online learning process there are two main phases : \emph{exploration} and \emph{exploitation}  which can be described as follows: 
(1) Exploration: sample the units with sampling times less than $D_1(t) = L \cdot t^z \cdot \log t$ up to meta stage $t$, with $L>0, ~0<z<1$ being constant parameters. Here $L$ is a sufficiently large (we shall specify its bounds later alongside the analysis) \emph{exploration} parameter. When the unit represents a channel, the sampling process is to probe the channel quality; when such an unit represents a user, the process corresponds to letting the user gain access to the channel to gather samples. 
 (2) Exploitation: execute the optimal scheduling policy using collected statistics as detailed in the offline solution, but with built-in tolerance for estimation errors as detailed below.  The sensing results (possibly multiple) from exploitation phases will also be collected and utilized for training purpose.

The above steps are rather standard within the regret learning literature: when a unit has not been explored/sensed sufficiently (e.g., a user has not accessed a channel for sufficient number of times in Model I or a channel has not been sampled sufficiently in Model II), the algorithm enters the exploration phase. Otherwise the algorithm mimics the procedures of calculating the optimal strategies as detailed in the offline solutions but with empirically estimated channel statistics.  One notable difference here is that since the offline dynamic policies involve channel sensing as part of the decision process, effectively additional samples are collected during exploitation phases and used toward estimation.  The general framework of this online approach is summarized as follows. 
\begin{figure}
\noindent\rule{8.5cm}{1pt}
\textbf{Online Solution : A unifying framework}\\[-5pt]
\noindent\rule{8.5cm}{1pt}

1: \emph{Initialization}: Initialize $L,z,t=1$ and sample each unit at least once. Update the collection of sample as $\mathcal F_0$ and the number of samples for each unit $j$ as $n_j(t)$.

2:  \emph{Exploration}: At stage $t$, if $\mathcal E(t): = \{j : n_j(t) < D_1(t) \} \neq \emptyset$, sense the set $\mathcal E(t)$ of units. 

3: \emph{Exploitation}: If $\mathcal E(t) = \emptyset$, calculate the optimal strategy according to steps in the corresponding offline algorithm (with relaxation) based on collected statistics $\{\mathcal F_{\hat{t}}\}_{\hat{t}=1}^{t-1}$. 

4:  \emph{Update}: $t: = t+1$; update sample set and for sampled unit $j$ update $n_j(t) := n_j(t)+1$. 
\\[-5pt]
\noindent\rule{8.5cm}{1pt}
\caption{\textbf{A unifying framework}}
\end{figure}

The exploitation phase is intended for the algorithm to compute and execute the optimal offline strategy using statistics collected during the exploration phase.  However, due to the estimation error, the executed version has to made error tolerant, e.g., by relaxing the 
%
conditions for the steps involving strict equalities. We show how this relaxation is done for the problem in Model II below.  

We now detail the online counterparts for \textbf{Offline\_MU} and \textbf{Offline\_MC} by filling in the details into above general framework. As a notational convention, we will denote by $\tilde{y}$ the estimated version of $y$, $t_j(k)$ the meta stage when the $k$-th sample is  collected for channel $j$ and $E[\tilde{X}]$ the sample mean of $X$. 
\begin{figure}
\noindent\rule{8.5cm}{1pt}
\textbf{Online\_MU : Algorithm details}\\[-5pt]
\noindent\rule{8.5cm}{1pt}

1: \emph{Initialization}: Initialize $L,z,l=1$ and let each user access the channel once. Denote the collected sample for user $j$ at stage $l$ as $\mathcal F^j_l$. Update number of collected samples $n_j(0)=1$ and $\mathcal F^j_0$ .

2:  \emph{Exploration}: At stage $l$, let $\mathcal E(l): = \{j : n_j(l) < D_1(l) \}$. At any decision epoch, if $\mathcal E(l) \neq \emptyset$ and let user $j \in \mathcal E(l)$ transmit right away. If multiple such $j$ exist, a user is selected randomly from $\mathcal E(l)$. 

3: \emph{Exploitation}: Otherwise if $\mathcal E(l) = \emptyset$, calculate the optimal threshold $\tilde{x}^*$ according to Eqn. (\ref{optimalx}) using collected statistics $\{\mathcal F^j_{\hat{l}}\}_{\hat{l}=0}^{l-1}$  for each user $j$ and follow the scheduling strategy detailed in \textbf{Offline\_MU}.

4:  \emph{Update}: $l := l+1$; for user $j$ who accessed the channel update $n_j(l) := n_j(l)+1$ and its sample set $\{\mathcal F^j_{\hat{l}}\}_{\hat{l}=0}^{l}$.
\\[-5pt]
\noindent\rule{8.5cm}{1pt}
\caption{\textbf{Online\_MU}}
\end{figure}

%
%
%
%
%
%

In \textbf{Online\_MC}, besides the clear separation between exploration and exploitation phases, several relaxations are invoked and the relaxation term $\frac{1}{t^{z/2}}$ could be viewed as the tolerance/confidence region. This tolerance region decreases in time $t$ and approaches 0 asymptotically as the estimation errors decrease as well. 
There is an inherent trade-off between exploration and the tolerance region. With more exploration steps (a larger $z$), a finer degree of tolerance region could be achieved. We shall further discuss the roles of $z$ in the analysis. 

\begin{figure}[!ht]
\noindent\rule{8.5cm}{1pt}
\textbf{Online\_MC: Algorithm details}\\[-5pt]
\noindent\rule{8.5cm}{1pt}

1: \emph{Initialization}: Initialize $L,z,t=1$ and sense each channel once. Update the number of channels being sensed and observed as $n_j(t) = 1$. Update the collection of sample for each channel $j$ as $\{\tilde{X}_{j}(t_j(k))\}_{k=1}^{n_j(t)}$. ($\mathcal F_t = \cup_j \{\tilde{X}_{j}(t_j(k))\}_{k=1}^{n_j(t)}$)

2:  \emph{Exploration}: At meta stage $t$, if $\mathcal E(t): = \{j : n_j(t) < D_1(t) \} \neq  \emptyset$, sense the set $\mathcal E(t)$ of channels sequentially and choose the one with best instantaneous condition. 

3: \emph{Exploitation}: If $\mathcal E(t) = \emptyset$ calculate the optimal strategy according to steps in \textbf{Offline\_MC} (with relaxation) based on collected statistics $\{\mathcal F_{\hat{t}}\}_{\hat{t}=0}^{t-1}$ as follows: \begin{itemize}
\item[] \emph{Online.STEP 1}: Calculate $a_j, b_j$ according to the follows 
\\[-15pt]
\begin{align*}
\tilde{a}_j &= \min \{ u: u \geq E[\tilde{X}_j], c_j +\frac{1}{t^{z/2}} \geq E[\max (\tilde{X}_j -u, 0)]\}~, \\[-5pt]
\tilde{b}_j &= \max \{ u: u \leq E[\tilde{X}_j], c_j +\frac{1}{t^{z/2}} \geq E[\max (u - \tilde{X}_j, 0)]\}~.
\end{align*}
~\\[-27pt]
\item[] \emph{Online.STEP 2}: Follow STEP 2 of \textbf{Offline\_MC}  but with the following  relaxation\\[-15pt]
\begin{align*}
\tilde{\mathcal R} = \biggl \{ j \in S, |\tilde{a}_j - \max_{i \in S} \tilde{a}_i| < \frac{1}{t^{z/2}} \biggr\}~.
\end{align*}
~\\[-27pt]
\item[] \emph{Online.STEP 3}: Follow STEP 3 of \textbf{Offline\_MC} but with the following relaxation\\[-15pt]
\begin{align*}
\tilde{b}_1 \geq \tilde{a}_2& - \frac{1}{t^{z/2}}:~\text{(3.3.1)};~\tilde{b}_2 \geq \tilde{b}_1 - \frac{1}{t^{z/2}}:~ \text{(3.3.2)}; \\[-5pt]
\tilde{g}_1(0) &\geq \max\{E[\tilde{X}_1], \tilde{g}_2(0)\}- \frac{1}{t^{z/2}}: ~\text{(3.3.2)}~.
 \end{align*} 
 ~\\[-25pt]
\end{itemize}

4:  \emph{Update}: $t: = t+1$; for sensed channel $j$ update $n_j(t) := n_j(t)+1$ and sample set $\mathcal F_t$. 
\\[-5pt]
\noindent\rule{8.5cm}{1pt}
\caption{\textbf{Online\_MC}}
\end{figure}




\section{Regret analysis}\label{sec:regret}

In this section we analyze performance of the online algorithm. We present the main results for both \textbf{Online\_MU} and \textbf{Online\_MC}.  Since \textbf{Online\_MC} is a much more complex algorithm and its analysis can be easily adapted for \textbf{Online\_MU}, as well as for brevity, we will only provide details for \textbf{Online\_MC}. 


Before formalizing the regret analysis for \textbf{Online\_MC}, we outline the key steps. The regret consists of two parts:  that incurred during exploration phases and that during exploitation phases. For exploration regret, we will try to bound the number of exploration steps that are needed. For the exploitation phase, the regret is determined by how accurate decisions are made using estimated values. Specifically, \emph{Online.STEP 1} does not have a decision making step as it is simply a calculation, though we will show later in the proof the calculation of $\{a_j,b_j\}_{j \in \mathcal O}$ does play an important role in the sorting and decision making process. In \emph{Online.STEP 2} if the sorting is done incorrectly then this could lead to error in \emph{Online.STEP 3}.  
as all decision making and sensing orders are based upon the ordering of the channels. \emph{Online.STEP 3} has the following error: (1) error in  the calculation of $a_j, b_j$s, (2) error in calculating $d_s$, and (3) error in calculating a set of value functions for sub-step 3.3.

\subsection{Assumptions}

We state a few mild technical assumptions. We will assume non-trivial channels, i.e., $E[X_j] > 0, \forall j \in \mathcal O$, so that they all have positive average rates. We will also assume all channel realizations are bounded, i.e., finite support over all channel condition,
$
0 \leq \sup_{j \in \mathcal O,\omega} X_j(t;\omega) < \infty,~\forall t, 
$
$\omega$ being an arbitrary channel realization. 
This is not a restrictive assumption since in reality the transmission rate is almost always non-trivial and bounded. 

Moreover denote 
\begin{align*}
\Delta^* = \max_{t, i\neq j, \omega_i, ~\omega_j}|X_j(t;\omega_j) - X_i(t;\omega_i)| + \sum_{i \in \mathcal O} c_i~.
\end{align*}
$\Delta^*$ can be viewed as an upper bound for a one step loss when a sub-optimal decision is made and $\Delta^* < +\infty$ (note $c_i$s are finite).
Finally, we assume the cdf of each channel $i$'s condition satisfies the Lipschitz condition, i.e., there exists $\mathcal L (\text{note different from $L$}), \alpha > 0$ such that 
\begin{align*}
|F_{X_i}(x+\delta) - F_{X_i}(x)| \leq \mathcal L\cdot |\delta|^{\alpha}, \forall i,x,\delta~.
\end{align*}
%
%
The Lipschitz condition has been observed to hold for various distributions, for example the exponential distribution and uniform distribution \cite{heinonen2005lectures}.

\subsection{Main results for Online\_MC}
We first separate the regret for different phases. We have the following simple upper bound on the regret $R_{II}(T)$,
\begin{align*}
R_{II}(&T) = R_e(T) + R_s(T)\leq R_e(T) + \biggl( R_2(T) + R_3(T)\biggr)~.\label{bound:sep}
\end{align*}
The first term $R_e(T)$ is the regret from exploration phases. $R_s(T)$ is the regret from exploitation which could be further upper bounded by the two terms from \emph{Online.STEP 2\&3} of \textbf{Online\_MC} respectively:  $R_2(T)$ comes from the sorting procedure and $R_3(T)$ comes from the last step of decision making. Notice for  \emph{Online.STEP 1} there is no direct regret incurred by parameter calculation: the errors in the calculation are reflected in \emph{Online.STEP 2\&3} later. The idea of upper bounding the regret by a union bound will be repeatedly utilized in the following analysis. For example, we can show that the regret in each step above can again be upper bounded by the sum of regrets of each of its sub-steps. Therefore we will not restate the details of the bounding for the rest of the proof. Denote the sum 
$s_{p}(T) := \sum_{t=1}^T \frac{1}{t^p}$. We have our main result for the regret analysis summarized as follows.

\begin{theorem}
There exists a constant $L$ such that the regret for \textbf{Online\_MC} is bounded by 
\begin{align*}
R_{II}&(T) \leq \Delta^* \biggl\{ N LT^z \log T + 
C_1\cdot s_{\alpha\cdot z/2}(T)+ C_2\cdot s_2(T)\biggr\}~,
\end{align*}
time uniformly, where $C_1,C_2>0$ are constants.
\end{theorem}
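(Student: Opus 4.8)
The plan is to use the decomposition $R_{II}(T)\le R_e(T)+R_2(T)+R_3(T)$ already set up in the text and bound each of the three pieces separately; the common prefactor $\Delta^*$ appears because a single wrong decision at any meta stage costs at most $\Delta^*$ in reward (the offline benchmark can at most gain the largest reward gap while saving the entire probing budget). So it suffices, for each of the three sources of error, to count how many ``effective'' costly steps there are and multiply by $\Delta^*$.

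\emph{Exploration.} A meta stage $t$ is an exploration stage only if $\mathcal E(t)\ne\emptyset$, i.e.\ some channel $j$ has $n_j(t)<D_1(t)=Lt^z\log t\le LT^z\log T$; each such stage increases $n_j$ by one for at least one $j\in\mathcal O$, so the number of exploration stages up to $T$ is at most $\sum_{j\in\mathcal O}D_1(T)=NLT^z\log T$. Charging $\Delta^*$ to each gives $R_e(T)\le\Delta^*\,NLT^z\log T$, the first term of the bound.

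\emph{Exploitation.} At an exploitation meta stage every channel has been sampled at least $D_1(t)=Lt^z\log t$ times. I would introduce a ``good estimation'' event $\mathcal G_t$ on which every empirical mean $E[\tilde X_j]$, every truncated expectation $E[\max(\tilde X_j-u,0)]$ and $E[\max(u-\tilde X_j,0)]$, and (via the DKW inequality) every empirical cdf $\tilde F_{X_j}$ entering the three steps of \textbf{Online\_MC} lies within the tolerance $1/t^{z/2}$ of its true value. On $\mathcal G_t^c$ I simply charge $\Delta^*$; Hoeffding's inequality for the means and truncated expectations, DKW for the cdfs, and $n_j\ge Lt^z\log t$ give $\Pr(\mathcal G_t^c)\le c_0N\,t^{-2L/(\Delta^*)^2}$, so for $L$ large enough this is $O(Nt^{-2})$ and summing over $t$ contributes the $C_2\,s_2(T)$ term. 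On $\mathcal G_t$ the relaxations $\frac{1}{t^{z/2}}$ built into \textbf{Online\_MC} are calibrated so that the exact optimal actions of \textbf{Offline\_MC} remain feasible for the relaxed rule at each branch point; hence the relaxed rule reproduces the offline-optimal decision \emph{unless} the realized state $x(n)$ or one of the estimated parameters $\tilde a_j,\tilde b_j,\tilde d_s,\tilde g_i$ falls within an $O(1/t^{z/2})$-band of a decision threshold. By the H\"older/Lipschitz assumption $|F_{X_i}(x+\delta)-F_{X_i}(x)|\le\mathcal L|\delta|^\alpha$, the probability that the state lands in such a band is $O(t^{-\alpha z/2})$, so the expected exploitation regret at stage $t$ on $\mathcal G_t$ is $O(\Delta^*\mathcal L\,t^{-\alpha z/2})$; summing over $t$ and absorbing constants produces the $C_1\,s_{\alpha z/2}(T)$ term. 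Adding the three contributions and taking $L$ as above gives the stated bound, valid for every $T$.

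\emph{Main obstacle.} The delicate part is the claim on $\mathcal G_t$ that a $1/t^{z/2}$-level estimation error produces only $O(t^{-\alpha z/2})$-level regret in all derived quantities. One must track the propagation through $\tilde a_j,\tilde b_j$ (parameter inversion), the sort in \emph{Online.STEP 2} (an incorrectly ordered pair corrupts the channel indexing used throughout \emph{Online.STEP 3}), and finally $\tilde d_s$ together with the recursively defined $\tilde g_i$ and $\tilde V(\cdot,S)$, which are built from up to $|S|\le N$ nested expectations. I would argue by induction on $|S|$ that $\|\tilde V(\cdot,S)-V(\cdot,S)\|_\infty$ is bounded by a constant (depending only on $N,\mathcal L,\alpha,\Delta^*$) times $t^{-\alpha z/2}$, using at each level that $V(\cdot,S)$ is monotone and $1$-Lipschitz in its first argument, that the inversion $c_j\mapsto a_j$ has controlled sensitivity away from the degenerate cases, and that substituting $\tilde F$ for $F$ in an expectation changes it by at most $\bar x\,\|\tilde F-F\|_\infty$. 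The two points that need care are that this inductive error accumulates only additively over the at most $N$ recursion levels (so it does not blow up, $N$ being a constant) and that a near-tie at any branch point of \emph{Online.STEP 3} (or a near-tie in the sort of \emph{Online.STEP 2}) indeed costs at most the band-probability times $\Delta^*$ rather than $\Delta^*$ outright; once these are established, the remaining bookkeeping is routine.
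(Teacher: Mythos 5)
Your proposal follows essentially the same route as the paper: the same three-way decomposition into $R_e$, $R_2$, $R_3$, the same count of at most $N D_1(T)$ exploration stages each charged $\Delta^*$, Chernoff--Hoeffding failure events of probability $O(t^{-2})$ feeding the $s_2(T)$ term, the Lipschitz band argument around each decision threshold yielding the $O(t^{-\alpha z/2})$ per-stage exploitation loss, and an induction on $|S|$ to control $\tilde V(\cdot,S)$, $\tilde d_s$, and $\tilde g_i$. The "good event" packaging is a slightly cleaner presentation of the paper's lemma-by-lemma union bounds, but the substance is identical.
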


Here $L$ is larger than a certain positive constant which we detail later. It is easy to notice since $T^z \log T$ and $s_{\alpha\cdot z/2}$ are both sub-linear terms ($s_{\alpha\cdot z/2}$ is on the order of $1-\frac{\alpha\cdot z}{2}$ while $s_2(\cdot)$ is bounded by a constant since $s_p(T) < \infty, \forall p > 1, T$.), $R_{II}(T)$ is also sub-linear and asymptotically we achieve zero-regret on average ($\lim_{T\rightarrow \infty}R_{II}(T)/T=0$). The first term $T^z\log T$ is due to the exploration while the term $s_{\alpha\cdot z/2}$ comes from exploitation. Clearly we see with a larger $z$ (more exploration invoked), we will have a larger regret term from exploration phases; however the regret for exploitation will decrease. The balanced setting is achieved at 
$
z=1-\frac{\alpha\cdot z}{2} \Rightarrow z = \frac{2}{2+\alpha}~.
$



\subsection{Bounding exploration regret}

We start with bounding the exploration regret $R_e(T)$. 
\begin{theorem}
The exploration regret $R_e(T)$ is bounded as
\begin{align}
R_e(T) \leq D_1(T)\cdot N\Delta^*~.
\end{align}
\end{theorem}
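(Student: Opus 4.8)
The plan is a direct counting argument: decompose $R_e(T)$ as a sum of the regret contributions of the individual exploration meta stages, bound each such contribution by the one-step loss constant $\Delta^*$, and then bound the number of exploration meta stages up to horizon $T$ by $N\,D_1(T)$.

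First I would establish the per-stage bound. On an exploration meta stage $t$ the algorithm probes every channel in $\mathcal E(t)$ (incurring sensing cost at most $\sum_{i\in\mathcal O}c_i$) and transmits on $j^\ast:=\argmax_{j\in\mathcal E(t)}X_j(t)$, so its payoff on that stage is $X_{j^\ast}(t)-\sum_{j\in\mathcal E(t)}c_j$, while the offline optimum $\pi_t$ collects $X_{\pi_t(\tau)}(t)-\sum_{n=1}^{\tau-1}c_{\pi_t(n)}$. The point to make explicit is that an exploration stage is \emph{not} a wasted stage — the user still earns a genuine channel reward — so the loss is only the difference of two channel realizations plus the excess sensing cost. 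Pointwise in the realization, if $\pi_t(\tau)\in\mathcal E(t)$ then $X_{\pi_t(\tau)}(t)\le X_{j^\ast}(t)$, and otherwise $\pi_t(\tau)\ne j^\ast$ so $X_{\pi_t(\tau)}(t)-X_{j^\ast}(t)\le\max_{i\neq j,\,\omega_i,\omega_j}|X_j(t;\omega_j)-X_i(t;\omega_i)|$; together with a cost difference bounded by $\sum_{i\in\mathcal O}c_i$, the gap is at most $\Delta^\ast$ by the very definition of $\Delta^\ast$. Taking expectations (and using that, by the IID assumption, the offline $\sup_{\pi\in\Pi^T}E^{\pi}[\cdot]$ splits across meta stages) preserves this bound, so each exploration meta stage contributes at most $\Delta^\ast$ to $R_e(T)$.

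Next I would count the exploration meta stages. Since $D_1(t)=Lt^z\log t$ is non-decreasing in $t\ge 1$, a channel $j$ can be sampled during an exploration phase at a stage $t\le T$ only while $n_j(t)<D_1(t)\le D_1(T)$, and each such sampling increments $n_j$ by one; hence channel $j$ is sampled in exploration at most $D_1(T)$ times over the whole horizon (up to the initialization round and the rounding of $D_1(T)$ to an integer, which are absorbed into constants). Summing over the $N$ channels bounds the total number of channel-sampling events occurring in exploration phases by $N\,D_1(T)$. Every exploration meta stage has $\mathcal E(t)\neq\emptyset$, hence produces at least one such event, so the number of exploration meta stages is at most $N\,D_1(T)$. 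Combining with the per-stage bound gives $R_e(T)\le(\text{number of exploration meta stages})\cdot\Delta^\ast\le D_1(T)\cdot N\Delta^\ast$, which is the claim.

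The argument is essentially routine, and I do not anticipate a genuine obstacle; the only points that need mild care are (i) phrasing the one-step loss bound so that it correctly credits the user for the reward it still collects while exploring — the relevant quantity is the gap against the offline optimum, which is exactly what $\Delta^\ast$ bounds, not a full meta stage's reward — and (ii) the monotonicity of $D_1$ together with the passage from the strict inequality $n_j(t)<D_1(t)$ to an integer sample count, both of which are harmless.
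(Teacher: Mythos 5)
Your proposal is correct and follows essentially the same route as the paper: bound the number of exploration meta stages by $N\,D_1(T)$ via the per-channel sampling requirement, and bound the per-stage loss by $\Delta^*$. The paper's own proof is a two-line version of exactly this counting argument; your additional care about why the one-step gap (reward difference plus excess sensing cost) is what $\Delta^*$ bounds is a faithful elaboration, not a different approach.
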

\begin{proof}
 Notice since the exploration phase requires $D_1(T)$ samplings for each channel up to time $T$, we know there are at most $N\cdot D_1(T)$ exploration phases being triggered. For each exploration phase, the regret is bounded by $\Delta^*$, completing the proof.
\end{proof}


\subsection{Bounding exploitation regret}

We next consider regret incurred during exploitation phases. 

\subsubsection{Exploitation regret for \emph{Online.STEP 2}}

We bound the regret associated with the sorting process of \emph{Online.STEP 2}. Details can be found in the Appendix.  

\begin{lemma}
Regret $R_2(T)$ is bounded as follows,
\begin{align*}
R_2(T) \leq \Delta^* \cdot 2\cdot N\sum_{t=1}^T \frac{2}{t^2}~.
\end{align*}
\label{lemmar2}
\end{lemma}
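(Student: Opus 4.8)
The plan is to decompose $R_2(T)$ over the exploitation meta stages and, at each such stage $t$, to pay the full one-step loss $\Delta^*$ only on a low-probability ``bad estimation'' event, showing this event has probability $O(N/t^2)$ so that summing over $t$ produces the $s_2(T)$ factor. I would first set up the per-stage bound. At an exploitation stage $t$ we have $\mathcal E(t)=\emptyset$, hence $n_j(t)\ge D_1(t)=L\,t^z\log t$ for every channel $j$. The sort executed by \emph{Online.STEP~2} is exactly STEP~2 of \textbf{Offline\_MC} fed with the empirical statistics, except that the exact tie-set $\mathcal R$ is replaced by the relaxed set $\tilde{\mathcal R}=\{j:\ |\tilde a_j-\max_i\tilde a_i|<t^{-z/2}\}$ and $\tilde a_j,\tilde b_j$ together with the secondary score $I_{b_j=a_j}E[X_j]+I_{a_j>b_j}\bigl(E[X_j\mid X_j\ge a_j]-c_j/P(X_j\ge a_j)\bigr)$ are all computed from samples.

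Next I would introduce the good event $G_t$: for every channel $j$, the empirical cdf is uniformly within $\epsilon_t:=c_0\,t^{-z/2}$ of $F_{X_j}$, and the empirical secondary score is within $\epsilon_t$ of its true value. The deterministic claim (the part of the argument deferred to the Appendix) is that on $G_t$ the built-in relaxation margins $t^{-z/2}$ appearing in $\tilde{\mathcal R}$ and in the definitions of $\tilde a_j,\tilde b_j$ are wide enough to absorb the resulting perturbations of $\tilde a_j,\tilde b_j$ --- this is exactly where the Lipschitz assumption $|F_{X_i}(x+\delta)-F_{X_i}(x)|\le\mathcal L|\delta|^\alpha$ is used, to convert ``$F$ moved by $\epsilon_t$'' into a controlled movement of the threshold parameters --- so that the ordering returned coincides with a legitimate, tie-broken output of \textbf{Offline\_MC}; channels with exactly equal parameters and scores are interchangeable for the value function $V$, so resolving such a tie differently from the offline realization costs nothing against the benchmark. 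Consequently the sorting-attributable regret at stage $t$ is nonzero only on $\overline{G_t}$ and is then at most $\Delta^*$, i.e.\ $E[\mathrm{regret}_2(t)]\le\Delta^*\,P(\overline{G_t})$.

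Then I would bound $P(\overline{G_t})$ by concentration. Since channel conditions are i.i.d.\ across meta stages and $n_j(t)\ge D_1(t)$, the Dvoretzky--Kiefer--Wolfowitz inequality (with a routine peeling/union argument over the random value of $n_j(t)$) gives $P(\sup_x|\tilde F_{X_j}(x)-F_{X_j}(x)|>\epsilon_t)\le 2\,t^{-2Lc_0^2}\,\mathrm{poly}(t)$, and Hoeffding's inequality yields a bound of the same form for the secondary score (all quantities are bounded because channel realizations and costs are bounded). Choosing $L$ large enough that the exponent dominates $2$ after absorbing the polynomial factor --- possible precisely because of the $\log t$ built into $D_1$ --- each of these is $\le t^{-2}$; a union bound over the $N$ channels and over the $O(1)$ estimates entering the sort, each with a two-sided tail, gives $P(\overline{G_t})\le 4N/t^2=2N\cdot 2/t^2$. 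Summing the per-stage bound over $t=1,\dots,T$ (exploitation stages form a subset of $\{1,\dots,T\}$, so this only loosens the inequality),
\[
R_2(T)\ \le\ \sum_{t=1}^{T}\Delta^*\,P(\overline{G_t})\ \le\ \Delta^*\cdot 2N\sum_{t=1}^{T}\frac{2}{t^2},
\]
which is the claim.

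The main obstacle is the deterministic Appendix step: proving that on $G_t$ the fixed margin $t^{-z/2}$ really does dominate the estimation-induced perturbation of $\tilde a_j,\tilde b_j$ (hence of $\tilde{\mathcal R}$, and downstream of the whole sorted list) for every instance. This requires (i) the Lipschitz condition to translate a uniform cdf error into a threshold-parameter error of compatible order, and (ii) the observation that any residual sub-optimality arising from genuinely near-tied channels being ordered ``locally wrongly'' is \emph{not} part of $R_2$ --- it is accounted for later in $R_3(T)$, which is why the $s_{\alpha\cdot z/2}$ term appears there rather than here. A secondary, routine nuisance is the random sample count in the concentration step, dispatched by the standard union-over-$n$ device; and one must ensure the $L$ chosen here is at least as large as the thresholds demanded by the other lemmas, so that a single $L$ works uniformly --- this is the reason the main theorem is stated as ``there exists a constant $L$''.
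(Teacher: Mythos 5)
Your skeleton is the same as the paper's: at each exploitation stage charge $\Delta^*$ only on a ``bad sorting'' event, show that event has probability $O(N/t^2)$, and sum. Where you differ is in the concentration machinery and in one accounting choice. The paper does not use a uniform DKW bound on the empirical cdf; it applies Chernoff--Hoeffding separately to each functional that enters the sort ($E[X_j]$, $E[(X_j-u)^+]$, $E[(u-X_j)^+]$, $E[X_j\mid X_j\ge a_j]-c_j/P(X_j\ge a_j)$), and then transfers the resulting $\epsilon$-error on these functionals to an error $c_{1,j}\epsilon$ on $\tilde a_j$ (and $c_{2,j}\epsilon$ on $\tilde b_j$) by a contradiction argument against the min/max definitions of $a_j,b_j$ --- the Lipschitz condition enters there only to keep the transfer constant $c_{1,j}$ finite, and again (Lemma~\ref{lemma:bd1}) to control $|P(X_j\ge\tilde a_j)-P(X_j\ge a_j)|$; it also needs a separate small lemma showing $P(\tilde a_j\neq\tilde b_j)\le 2/t^2$ when $a_j=b_j$, so that the indicator $I_{b_j=a_j}$ in the secondary score is evaluated correctly. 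Your DKW route would deliver the same per-functional accuracy and is arguably cleaner (and you are right, and more careful than the paper, about the random sample count). The one point I would push back on is your suggestion that misorderings of ``genuinely near-tied'' channels can be deferred to $R_3(T)$: the paper does not do this and it would be circular, since the $R_3$ analysis presupposes the sorted list is correct. Instead the paper assumes the minimum gaps $\epsilon_3,\epsilon_4>0$ and $\min_{a_{k_1}\neq a_{k_2}}|a_{k_1}-a_{k_2}|>0$ and folds them into the lower bound on $L$ (Conditions 1 and 2), so that on the good event the estimation error sits strictly below half of every gap and \emph{no} misordering occurs among non-identical channels; exact ties are harmless because tie-broken orders are value-equivalent. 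Your construction already supports this --- just fix $c_0$ below half the relevant gaps divided by the transfer constants and take $L\ge 1/c_0^2$ --- so drop the deferral to $R_3$ and the argument matches the paper's, with the per-stage factor $2N\cdot 2/t^2$ arising exactly as in your union bound (the paper gets it as $P(\mathcal R\neq\tilde{\mathcal R})+P(\tilde S\neq S)\le 2\cdot N\cdot 2/t^2$).
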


\rev{The main challenge in this proof is to relate the sampling uncertain to the ones in our decision making process. First of all we could show the calculation of $E[X_j], a_j, b_j$ can fall into certain confidence region when the number of exploration steps are large enough ($L$). Moreover the estimation errors of $a_j,b_j$ are proportional to the one for $E[X_j]$. Intuitively this is due to the calculation of $a_j, b_j$ which relates to the calculation of $E[X_j]$ in a piece-wise linear way. Next consider calculating $j^*$. There are potentially two types of errors. First is the decision error associated with the decision process of telling whether the following holds $
I_{\tilde{a}_j = \tilde{b}_j}.~
$
To bound the error of making the wrong call, we are going to show when $a_j = b_j$, we could bound the probability of $\tilde{a}_j \neq \tilde{b}_j$. Alongside the binary decision making, we also have the estimation error for \emph{Online.STEP 2} for terms such as $E[X_j], E[X_j|X_j \geq a_j], \frac{c_j}{P(X_j \geq a_j)}$.}
%
%
%


\subsubsection{Exploitation regret for \emph{Online.STEP 3}}

We bound the regret associated with the decision making step (\emph{Online.STEP 3}). Details can be found in the appendix. 
\begin{lemma}
Regret $R_3(T)$ is bounded as follows, 
\begin{align*}
R_3(T) &\leq \Delta^* \cdot \biggl(C_1\cdot s_{\alpha\cdot z/2}(T) + C^{*}_2\cdot s_2(T)\biggr)~,
\end{align*}
where $C_1, C^{*}_2$ are positive constants.\label{lemmar3}
\end{lemma}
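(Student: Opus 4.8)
The plan is to decompose the regret $R_3(T)$ incurred in \emph{Online.STEP 3} into contributions from the distinct sources of error identified in the outline preceding the regret analysis, namely: (1) the error in the estimates $\tilde a_j,\tilde b_j$ (inherited from \emph{Online.STEP 1} but only affecting a decision here), (2) the error in computing $\tilde d_s$, and (3) the error in computing the value functions $\tilde g_1,\tilde g_2$ (and the derived threshold $\tilde b_0$) used in sub-step 3.3. By a union bound over these sources and over the at most $N$ channels and constantly many sub-cases, it suffices to bound the probability that any one of the relevant comparisons in \emph{Online.STEP 3} is resolved differently from the offline algorithm, and to multiply that probability by the worst-case one-step loss $\Delta^*$, then sum over $t=1,\dots,T$.

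The key steps, in order: First, I would invoke the concentration estimate already used for Lemma \ref{lemmar2} — since at an exploitation stage $t$ every channel has been sampled at least $D_1(t)=L t^z\log t$ times, a Chernoff/Hoeffding bound (channel conditions are bounded by the Assumptions) gives that $|E[\tilde X_j]-E[X_j]|$, and likewise the empirical versions of $E[\max(\tilde X_j-u,0)]$, $P(\tilde X_j\ge u)$, etc., deviate from their true values by more than $\frac{1}{2 t^{z/2}}$ only with probability $O(1/t^2)$ for $L$ large enough. Second, I would argue that each of $\tilde a_j,\tilde b_j$, $\tilde d_s$, $\tilde g_i(x)$ is a Lipschitz (in fact piecewise-linear in the relevant empirical quantities) function of these concentrated estimates, so their errors are $O(1/t^{z/2})$ on the same high-probability event; here the Lipschitz condition on the cdf $F_{X_i}$ is needed to translate an $O(1/t^{z/2})$ error in a \emph{value} comparison into an $O(1/t^{\alpha z/2})$ error in the \emph{probability} of landing in a region where the offline and online decisions diverge (e.g. when $x(n)$ is within $O(1/t^{z/2})$ of a threshold such as $d_s$ or $b_0$, the two algorithms may disagree, but the chance that $x(n)$ falls in such a band is $\le \mathcal L\cdot O(1/t^{z/2})^{\alpha}=O(1/t^{\alpha z/2})$). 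Third, I would separate the two resulting terms: the "wrong-branch because the state lies in a tolerance band" events contribute $\Delta^*\cdot C_1\sum_t 1/t^{\alpha z/2}=\Delta^* C_1 s_{\alpha z/2}(T)$, while the "concentration failed" events contribute $\Delta^*\cdot C_2^*\sum_t 1/t^2=\Delta^* C_2^* s_2(T)$; care is needed with the relaxed thresholds ($\tilde b_1\ge\tilde a_2-\frac{1}{t^{z/2}}$, etc.) to check that on the good concentration event the online algorithm never makes a qualitatively wrong choice — it may only differ from offline when the true quantities themselves are within the tolerance, which is exactly the band event already accounted for. Summing over the finitely many sub-cases and channels only changes the constants, giving the claimed bound.

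The main obstacle I anticipate is the propagation of error through the value functions $\tilde g_i$ and $\tilde d_s$, which are defined implicitly via the Bellman recursion $V(x,S)=-c_1+E[V(\max\{d_s,X_1\},S-\{1\})]$ and the DP for $V$ itself. Showing that $\tilde V$ is uniformly close to $V$ and Lipschitz in $x$ requires an induction on $|S|$: at each level, $V(\cdot,S)$ is a max of a finite number of terms each of which is an expectation of $V(\cdot,S')$ composed with $\max\{\cdot,X_j\}$, so monotonicity and the $1$-Lipschitz property of $x\mapsto\max\{x,X_j\}$ keep the Lipschitz constant under control, and the estimation error accumulates only additively across the at most $N$ levels. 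Pinning down that the solution $\tilde d_s$ of the perturbed fixed-point equation exists and is within $O(1/t^{z/2})$ of $d_s$ — using that the right-hand side is monotone in $d_s$ with a slope bounded away from zero (this is where one uses $E[X_j]>0$ and boundedness, or the Lipschitz property, to get a non-degenerate derivative) — is the delicate quantitative point; everything else is bookkeeping with union bounds and the two sums $s_{\alpha z/2}(T)$ and $s_2(T)$.
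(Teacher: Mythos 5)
Your proposal matches the paper's own argument in both structure and substance: the same decomposition of \emph{Online.STEP 3} into its sub-cases, the same Chernoff--Hoeffding concentration giving $O(1/t^2)$ failure events (the $s_2(T)$ term), the same use of the Lipschitz cdf condition to bound the probability that the state lands in an $O(1/t^{z/2})$ tolerance band (the $s_{\alpha z/2}(T)$ term), and the same induction on $|S|$ for the value-function error together with a monotone fixed-point argument with non-degenerate slope for $\tilde d_s$ (the paper's constant $C_{d_s}=P(X_1\le d_s)$, with the degenerate case handled separately in a remark). No substantive gap; this is essentially the paper's proof.
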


\rev{The proof is obtained by bounding the decision errors in each of the sub-steps \emph{Online.STEP 3.1, 3.2, 3.3}. The technical challenges again come from bounding the errors with calculating various parameters in the decision making steps, including for instance $d_s$ and the value function $V(,\cdot,)$s. }

Combine $R_e(T),R_2(T),R_3(T)$ we have our main results.

%


\subsection{Discussion on parameter $L$}

In most of our proved results, we assumed $L$ to be significantly large. We summarize the actual conditions on $L$ below (please refer to the appendix for details): 
\begin{align*}
\text{\bf (Condition 1) : }L  &\geq \max\{4,1/(\frac{\min_{a_{k_1} \neq a_{k_2}} |a_{k_1} - a_{k_2}|}{
2\max_{j} c_{1,j}})^2\}~,\\
\text{\bf (Condition 2) : }L  &\geq 1/\epsilon^2_o~,
\end{align*}
where $\{c_{1,j}\}_{j \in \mathcal O}$ is a set of positive constants and $\epsilon_o$ is a solution of $\epsilon$ for 
$
C \cdot (\epsilon + \mathcal L \cdot(c_{1,j}+1)^{\alpha}\epsilon^{\alpha}) \leq \frac{\min \{\epsilon_3,\epsilon_4\}}{2}~,
$
where $C$ is a positive constant and
$
\epsilon_3 = \min_{j \neq k} |E[X_j] - E[X_k]|, ~\epsilon_4 = \min_{j \neq k} |\frac{E[X_j]-c_j}{P(X \geq a_j)}-\frac{E[X_k]-c_k}{P(X \geq a_k)}|~
$ (we assume $\epsilon_3,\epsilon_4 > 0$).

From \textbf{(Condition 1)} we know when $\{a_i\}$s are closer to each other, $L$ should be chosen to be larger. Also from \textbf{(Condition 2)} we know when channels' expected reward $E[X_j]$ and $\frac{E[X_j]-c_j}{P(X \geq a_j)}$ (can be viewed as potential term when sensed) are closer to each other, again $L$ should be chosen to be larger. The intuition here is that in such cases a larger $L$ can help achieve higher accuracy for the estimations to differentiate two channels that are similar.


The selection of $L$ depends on a set of $\epsilon$s, which further depends on statistical information of $X_j$s (though weaker as we only need to know a lower bound of them) which is assumed to be unknown. However, following a common technique  \cite{agrawal1995continuum}, the assumption can be further released but with potentially larger regret.  
In particular one can show that at any time $t$ with $L$ being a positive constant the estimation error $\epsilon_t$ for any terms (e.g., $a,b$ or $E[X_j]$) satisfy the following,
$
P(\epsilon_t >  \frac{1}{t^{\theta}}) \leq \frac{1}{t^{\nu}} ~,
$
with $\theta,\nu > 1$. 
Therefore with the error region $\epsilon_t$ being small enough, there would be no error associated with differentiating the channels of the algorithm. Thus there exists a constant $T_0$ such that,
$
\epsilon_t < \min \epsilon, \forall t > T_0.~
$Consider the the case $\epsilon_t \leq  \frac{1}{t^{\theta}}$. Since when the error happens under this case,  two estimated terms (the sub-optimal and optimal one) are separated by at most $2 \epsilon_t$. The probability of the corresponding term falls into this region is bounded as
$
|F_{X_i}(x+\epsilon_t)-F_{X_i}(x-\epsilon_t)|  \leq \mathcal L\cdot2^{\alpha}\cdot  \frac{1}{t^{\alpha \cdot \theta}}
$ 
by the Lipschitz condition. Therefore we have the extra error bounded by 
$
\sum_{t=1}^{T_0}  \mathcal L \cdot2^{\alpha}\cdot \frac{1}{t^{\alpha \cdot \theta}}
$, which is a constant growing sub-linearly up to time $T_0$.

\subsection{Main results for Online\_MU}

For \textbf{Online\_MU} we can similarly prove the following result 
\begin{theorem}
There exists a constant $L$ such that the regret for \textbf{Online\_MU} is bounded by 
\begin{align*}
R_{I}&(H) \leq \Delta^* \biggl\{ M LH^z \log H + \hat{C}_1\cdot s_{\alpha\cdot z/2}(H)+ \hat{C}_2\cdot s_2(H)\biggr\}~,
\end{align*}
time uniformly, where $\hat{C}_1,\hat{C}_2>0$ are constants.
\end{theorem}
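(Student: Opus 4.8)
The plan is to mirror exactly the decomposition used for \textbf{Online\_MC}: write $R_I(H) = R_e(H) + R_s(H)$ and bound the two pieces separately. For the exploration part the argument behind the exploration-regret bound (the analogue of $R_e(T)\le D_1(T)N\Delta^*$) carries over verbatim: each of the $M$ users is forced into at most $D_1(H) = LH^z\log H$ exploration meta stages, so at most $M\cdot D_1(H)$ meta stages are spent exploring, and each such meta stage costs at most $\Delta^*$ in forgone reward, since the per-meta-stage reward in \eqref{obj2} is a difference of bounded channel qualities normalized by a denominator containing a bounded number of contention slots, precisely the quantity $\Delta^*$ upper bounds. This gives $R_e(H)\le M L H^z\log H\cdot\Delta^*$, the first term of the claimed bound.

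The exploitation part is Model-specific but much simpler than for \textbf{Online\_MC}, because \textbf{Offline\_MU} is a single scalar threshold rule: in each meta stage the user winning access transmits iff $X(n)\ge x^*$, where $x^*$ is the unique root of $\phi(u):=E[X-u]^+ - u\zeta/(p_s K)$. The only unknown is $E[X-u]^+$; the slope term $\zeta/(p_s K)$ is fixed by the (known) access probabilities and protocol parameters. First I would show that, conditioned on the algorithm being in exploitation at meta stage $t$, the empirical estimate $\widetilde{E}[X-u]^+$ built from the $\Theta(t^z\log t)$ collected samples is within $\epsilon_t = O(t^{-z/2}\sqrt{\log t})$ of $E[X-u]^+$ with probability $1-O(t^{-2})$, by a Hoeffding/union bound over the bounded samples exactly as in the concentration lemmas behind Lemma~\ref{lemmar2}. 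Since $\phi$ is continuous, strictly decreasing ($u\mapsto E[X-u]^+$ is non-increasing, $u\mapsto u\zeta/(p_sK)$ strictly increasing) and, by the Lipschitz assumption on $F_X$, has slope bounded away from $0$ near its root, a monotone-inverse (implicit-function) argument converts this into $|\tilde x^* - x^*|\le C\epsilon_t$ on that high-probability event, where $\tilde x^*$ is the root of the relaxed/estimated equation \eqref{optimalx}.

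It then remains to turn the threshold error into a per-meta-stage reward loss. On the good event, a wrong stop/continue decision in a meta stage can occur only when the realized $X(n)$ lands in the band between $x^*$ and $\tilde x^*$, of width at most $C\epsilon_t$; by the Lipschitz condition the probability mass of this band is at most $\mathcal L (C\epsilon_t)^\alpha = O(t^{-\alpha z/2})$ (up to logarithmic factors), and each such error costs at most $\Delta^*$. On the complementary event the loss is again at most $\Delta^*$ but occurs with probability $O(t^{-2})$. Summing over $t=1,\dots,H$ gives an exploitation regret of order $\Delta^*(\hat C_1 s_{\alpha z/2}(H) + \hat C_2 s_2(H))$, provided $L$ is taken large enough — an analogue of \textbf{(Condition 1)}--\textbf{(Condition 2)}, here only requiring a lower bound on the slope of $\phi$ at $x^*$ and on $E[X]$ — so that for all $t$ the relevant band stays inside the region where the Lipschitz slope bound holds. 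Adding $R_e(H)$ and $R_s(H)$ yields the stated bound. The step I expect to be the main obstacle is the reduction at the start of the exploitation argument: passing from ``the contention-and-sensing decision process with its collected samples'' to a clean i.i.d.\ concentration statement, since in Model~I samples are gathered at successful-contention instants and across different users. One must invoke the modeling assumption that $\{X(n_k)\}_k$ is i.i.d., and argue that the random contention times $\eta_k$ are independent of the $X$-samples with $E[\eta]=\zeta/p_s<\infty$, so that both the renewal-reward normalization in \eqref{obj2} and the per-meta-stage loss bound $\Delta^*$ remain valid after the estimated threshold $\tilde x^*$ is substituted for $x^*$.
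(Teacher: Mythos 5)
Your proposal is correct and follows essentially the same route the paper takes: the paper proves this theorem only by remarking that the \textbf{Online\_MC} analysis adapts directly, with the first term from exploration, the second from inaccurate calculation of $x^*$, and the third bounding the event that $\tilde{x}^*$ deviates too far from $x^*$ --- exactly your decomposition into $R_e(H)\le M D_1(H)\Delta^*$ plus a Chernoff--Hoeffding concentration of the estimated root of Eqn.~(\ref{optimalx}) and a Lipschitz bound on the mass of the band between $x^*$ and $\tilde{x}^*$. One small remark: the lower bound on the slope of $\phi$ at its root comes for free from the linear term $u\zeta/(p_sK)$ (since $E[X-u]^+$ is non-increasing), so no extra assumption beyond those already stated is needed there.
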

Notice though $R_{I}(H)$ looks similar to $R_{II}(T)$, they may have very different parameters for each term, i.e, $\hat{C}_1,\hat{C}_2$ may be quite different from $C_1,C_2$, as well as different constraints for $L$ due to the different statistical structure of the two problems. Again the first term is coming from exploration phases, the second term due to inaccurate calculations of $x^*$ and last term bounds the event that $\tilde{x}^*$ is too different from $x^*$.

\section{Simulation}\label{sec:simulation}

In this section we show a few examples of the performance of the proposed online algorithm via simulation. We measure the average \emph{regret} rate $R_I(l)/l(R_{II}(t)/t)$ 
and compare our performance to the optimal offline algorithm, a static best single channel policy, as well as that of a weak-regret algorithm. 

  For simplicity of demonstration we assume channel qualities follow \emph{exponential distribution} but with different parameters \footnote{We have similar observations for other distributions. The details are omitted for brevity.}. The corresponding distributions' parameters are generated uniformly and randomly between $[0,0.5]$. Users' attempt rate $p_i$s are uniformly generated in the interval $[0,0.5]$ (in Model I). The costs for sensing the channels (in Model II) are also randomly generated according to uniform distribution between $[0,0.1]$. In the following simulation for Model I we have $M=5$ users while for Model II we have $N = 5$ channels. Simulation cycle is set to be $H = T = 4,000$. In the set of results for performance comparison with offline solutions, we set the exploration parameters as $L=10, z=1/5$. Later on we show the performance comparison w.r.t. different selection of $L$ and $z$.
%
\subsection{Comparison with Offline Solution}

We first take the difference between the oracle (\textbf{Offline\_MU}) and \textbf{Online\_MU} at each step $t$ and divide it by $t$ (i.e., we plot $R_{I}(t)/t$). This regret rate is plotted in Figure~\ref{err:1} and clearly we see a sub-linear convergence rate. We repeat the experiment for \textbf{Online\_MC} and the regret convergence is shown in Figure~\ref{err:2}, which validate our analytical results. To make the comparison more convincing, we compare the accumulated reward between \textbf{Online\_MC}, \textbf{Offline\_MC} and the best single-channel (action) policy, which always selects the best channel in terms of its average rate (channel statistics is assumed to be known a priori) in Figure~\ref{test:1}. 
In particular we see the accumulated rewards of \textbf{Online\_MC} (red square) is close to the performance of the oracle (blue circle) who has all channel statistical information and follows the optimal decision  process as we previously depicted in \textbf{Offline\_MC}. We observe the dynamic policies clearly outperform the best single channel policy.

\begin{figure}[!h]
\centering
\subfigure{
                \centering
                \includegraphics[width=0.45\textwidth]{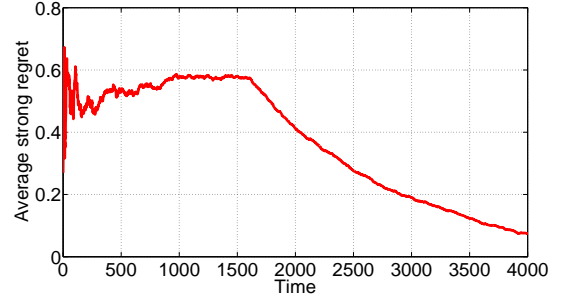}
        }\caption{Convergence of average regret : \textbf{Online\_MU}}\label{err:1}
\end{figure}
\begin{figure}[!h]
\centering
\subfigure{
                \centering
                \includegraphics[width=0.45\textwidth]{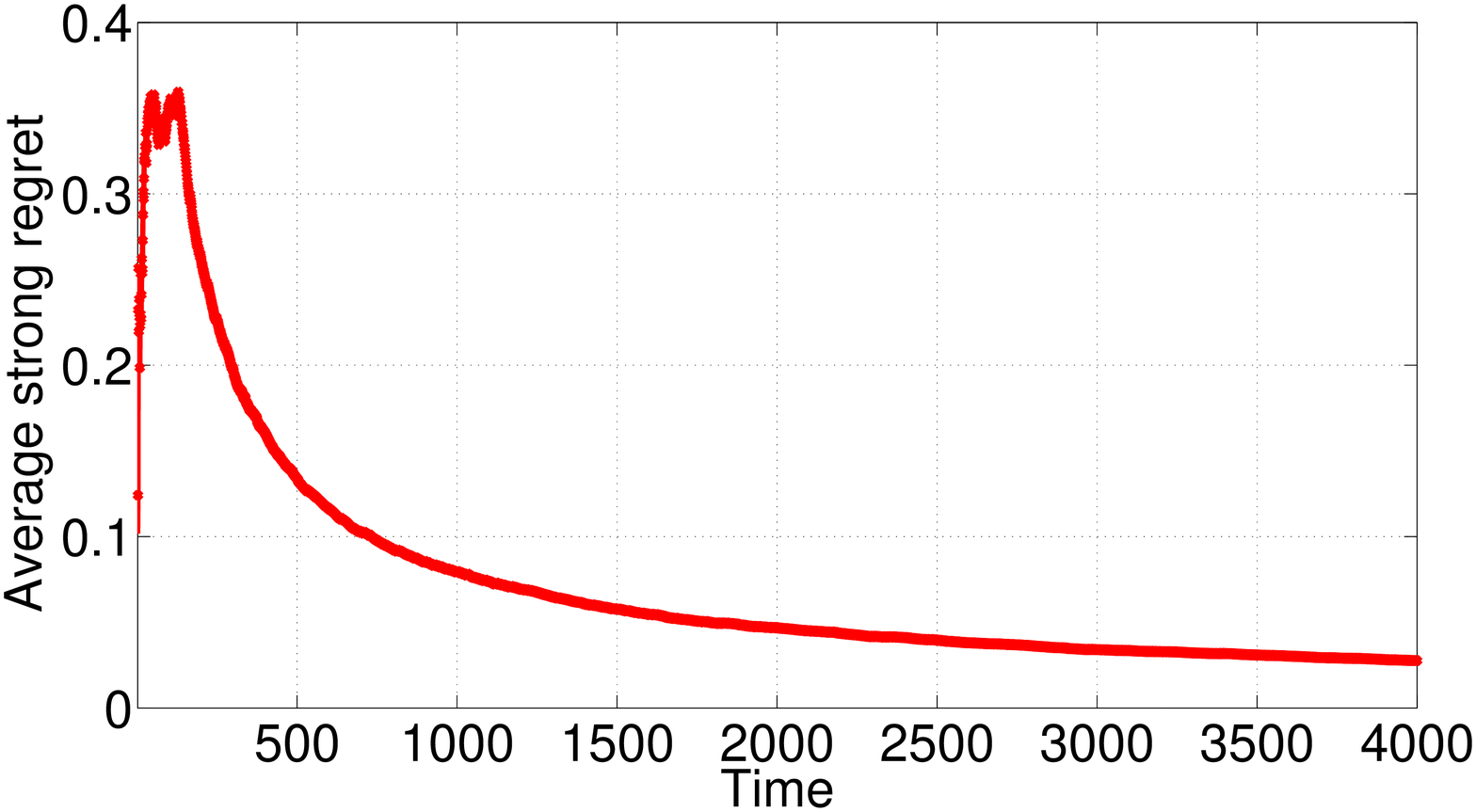}
        }\caption{Convergence of average regret : \textbf{Online\_MC}}\label{err:2}
\end{figure}

\begin{figure}[!ht]
\centering
\subfigure{
                \centering
                \includegraphics[width=0.45\textwidth]{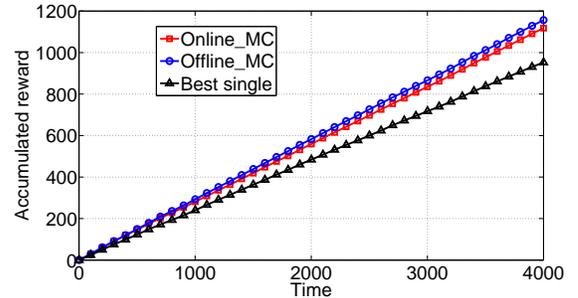}}
                \caption{ \textbf{Online\_MC} v.s. \textbf{Offline\_MC} v.s. Best single}\label{test:1}
\end{figure}

\newpage
\subsection{\rev{Comparison with naive reinforcement learning solution}}
\rev{
As we mentioned earlier in the introduction, there exist online solutions for a user to find the best channel in terms of its average condition (minimizing weak regret). We demonstrate the advantages of our proposed online algorithm with a comparison between \textbf{Online\_MC} with UCB1, a classical online learning weak-regret algorithm \cite{Auer:2002:FAM:599614.599677} suitably designed for IID bandits. The result in Figure \ref{test:2}  clearly shows the performance gain by using \textbf{Online\_MC}.
\begin{figure}[!ht]
\centering
\subfigure{
                \centering
                \includegraphics[width=0.45\textwidth]{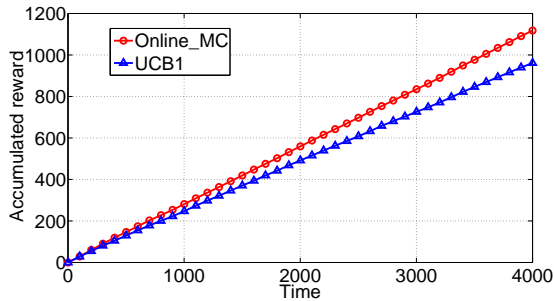}}
                \caption{ \textbf{Online\_MC} v.s. UCB1 }\label{test:2}
\end{figure}
}
\subsection{Effects of parameter selection}

We next take a closer look at the effects of parameter selection, primarily with $L$ and $z$. We demonstrate with \textbf{Online\_MC}. We repeat the above sets of experiment w/ different $L, z$ combinations and tabulate the average reward per time step. From Table \ref{comp:parameterL} we observe the selection of $L$ is not monotonic: a smaller $L$ incurs less exploration steps but more errors will be invoked at exploitation steps due to its less confidence in calculating the optimal strategy. On the other hand, a large $L$ inevitably imposes higher burden on sampling and thus becomes less and less favorable with its increase. 
Similar observations hold for $z$ since $z$ controls the length of exploration phases jointly with $L$ but with different scale. However it is indeed interesting to observe that when $z$ grows large enough (e.g., $z=\frac{1}{2}$), the performance drops drastically: this is due to the fact in such a case more than enough efforts have been spent in sensing steps.

\begin{table}[!h]
\begin{scriptsize}
\begin{center}
\begin{tabular}{  | c| c | c | c | c  | c|}
    \hline
    $L (z=1/5)$ & 5 & 10 & 20 & 30 & 40 \\ \hline \hline
    Average reward & 0.3391 & \textbf{0.3522} &  0.3353 & 0.3183& 0.3166  \\ \hline 
  \end{tabular}
\end{center}
  \caption{Diff. $L$ (Avg. = 0.27 w/ random channel selection)}
  \label{comp:parameterL}
\end{scriptsize}
\end{table}

\begin{table}[!h]
\begin{scriptsize}
\begin{center}
\begin{tabular}{  | c| c | c | c | c  | c|}
    \hline
    $z (L=10)$ &1/6 & 1/5 & 1/4 & 1/3 & 1/2  \\ \hline \hline
    Average reward & 0.3411 &0.3522 & \textbf{0.3557} & 0.3017 & 0.1949   \\ \hline 
  \end{tabular}
\end{center}
  \caption{Diff. $z$ ( Avg. =0.27 w/ random channel selection)}
  \label{comp:parameterz}
\end{scriptsize}
\end{table}


%
%

\section{Discussion}\label{sec:discuss}


\rev{In this section we discuss several possible extensions of the current sets of results, primarily concerning the statistical assumption of channel evolutions. Throughout the paper, we assume the channel statistics over time evolves as an IID process, though with unknown distributions and parameters. An immediate extension of this work is to study the online learning algorithm when such evolution is Markovian. For Markovian channels we need to again consider two categories of problems, namely rested and restless bandits \cite{mahajan2008multi}.  For rested bandit, the offline (when transition parameters being known) optimal solution is famously known as the Whittle's index. Following similar exploration and exploitation procedures detailed in the current paper we can achieve an accurate enough estimation of all transition parameters of the bandits and thus approximate the optimal indices.

The main difficulties for restless case are due to the facts that even the offline strategy is not easy to obtain under this scenario, that is we do not have a clear target to track. Under certain setting, myopic policy has been shown to be optimal in one of our work \cite{liu:it14} and following procedures in RCA proposed in \cite{tekin2011online} for learning with restless bandit we could again achieve an fairly accurate estimation and approach myopic sensing in an online fashion. However optimal solution for general stopping rule/sequential decision making problems with restless bandits is not yet clear at this moment, which is also one of our focus.

Another interesting extension we would like to approach is to learn with (multiuser) interferences. A natural way of doing this is to combine stochastic bandit learning (for channels availability) with adversarial learning (for users interference). We conjecture similar results could be obtained while we emphasis in such case two types of explorations would be needed: first is the exploration for other users' availability as commonly done in adversarial settings and the other one for exploring channels' statistics. However the technical validation would NOT be trivial to detail out since considering multiuser effects in a sequential decision making process is known to be hard, even in a offline setting \cite{liu2013stay}, primarily due to collision and interferences.


The third aspect we concern is on the assumption that within the time horizon of our problem the statistical properties of channels stay unchanged. However though we made such assumption (in order to derive bounds), the exploration nature of the learning algorithm in principle is designed to detect and adapt to changes in the statistics. We are currently looking into the problem of using additional randomization techniques to enhance the adaptivity. Notably one of recent paper proved a sharp bound (sub-linear) for certain cases when such non-stationary statistical properties satisfy bounded variation \cite{besbes2014optimal}. The technical difficulties in our setting are naturally more challenging since we not only need to track the change of each bandit's mean reward, but also many other statistical parameters that are associated with the decision making processes.


}

\section{Conclusion}\label{sec:conclude}

In this paper we studied online channel sensing and transmission scheduling in wireless networks when channel statistics are unknown a priori. Without knowing such information we propose an online learning algorithm which helps collect samples of channel realization while making optimal scheduling decisions. 
We show our proposed learning algorithm (for both a multiuser and multichannel model) achieves sub-linear regret uniform in time, which further gives us a zero-regret algorithm on average. Our claim is validated via both analytical and simulation results.

\section*{Acknowledgment}

This work is partially supported by the NSF under grant CNS 1217689.

\bibliographystyle{unsrt}
\bibliography{myrefs}
%
\section*{APPENDICES}
\subsection*{Notations}
We summarize the main notations in Table \ref{notation}.


\begin{table}
\begin{center}
  \begin{tabular}{| c|| c | }
    \hline
    Notations & Physical meaning \\ \hline\hline
       $M/\mathcal M$ &number/set of users\\\hline
     $N/\mathcal O$ & number/set of channels \\ \hline
    $S$ & subset of channels\\ \hline
    $X_i (X_i(t))$ & channel $i$'s reward (at time $t$) \\ \hline
    $c_i$ & cost for sensing channel $i$\\ \hline
        $p_i$ & access attempt rate of user $i$ \\ \hline
    $V(x,S)$ & value function with state $(x,S)$ \\ \hline
    $f_{X_i}, F_{X_i}$ & p.d.f./c.d.f. of channel $i$\\ \hline
    $\pi,\alpha$ & access \& sensing policies \\ \hline
    $t,n$ & system time, decision step for each $t$ \\ \hline
    $R_{I(II)}(H(T))$ & accumulated regret up to stage $H(T)$ \\ \hline
    $(x(n),S_n)$ & information state at $n$-th epoch \\ \hline
    $L,z$ & exploration parameters \\ \hline
    $\mathcal L,\alpha$ & Lipschitz parameters \\ \hline
\end{tabular}
\end{center}
\caption{Main Notations}\label{notation}
\end{table}

\subsection*{Outline of the proofs and main results}
Due to space limitation we first sketch the main steps and results towards establishing the proved theorems. 

\subsubsection*{ Proof of regret for $R_2(T)$}

\begin{lemma}
With sufficiently large $L(\geq \frac{1}{\epsilon^2})$, $\forall j$ we have, 
$
P(|E[\tilde{X}_j] - E[X_j]| > \epsilon) \leq \frac{2}{t^2}~,$ and $
P(|\tilde{a}_j - a_j| > c_{1,j}\cdot \epsilon) \leq \frac{2}{t^2},~P(|\tilde{b}_j - b_j| > c_{2,j}\cdot \epsilon) \leq \frac{2}{t^2}~,$
where $\epsilon, c_{1,j},c_{2,j}$ are positive constants.\label{lemma:ab}
\end{lemma}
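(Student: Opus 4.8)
The plan is to establish the three concentration bounds in turn, starting from the empirical mean and then propagating the error to $\tilde a_j$ and $\tilde b_j$ via the (piecewise-linear) structure of the defining equations. First I would handle $P(|E[\tilde X_j] - E[X_j]| > \epsilon)$. Since each channel $j$ is sampled at least $D_1(t) = L t^z \log t$ times by meta stage $t$ whenever it is in the exploitation regime, and the realizations $X_j$ are bounded (say in $[0,\Delta^*]$), a Hoeffding bound gives $P(|E[\tilde X_j] - E[X_j]| > \epsilon) \leq 2\exp(-2 n_j(t)\epsilon^2/(\Delta^*)^2)$. Choosing $L$ large enough (specifically $L \geq \frac{1}{\epsilon^2}$ up to the $(\Delta^*)^2$ and $\log t$ factors, which is where the hypothesis $L \geq 1/\epsilon^2$ comes from) makes the exponent at least $2\log t$, yielding the claimed $\frac{2}{t^2}$; more precisely one uses $n_j(t) \geq L t^z \log t \geq L \log t$ so that $2 n_j(t)\epsilon^2/(\Delta^*)^2 \geq 2\log t$ for $L$ past the stated threshold.

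Next I would transfer this to $\tilde a_j$. Recall $a_j = \min\{u : u \geq E[X_j],\ c_j \geq E[\max(X_j - u, 0)]\}$ and $\tilde a_j$ is defined the same way with $E[\tilde X_j]$ and the tolerance $c_j + t^{-z/2}$ in place of $c_j$. The key observation is that the map $u \mapsto E[\max(X_j - u, 0)] = \int_u^\infty (1 - F_{X_j}(s))\,ds$ is convex, nonincreasing, and $1$-Lipschitz (its derivative is $-(1-F_{X_j}(u)) \in [-1,0]$); similarly $u \mapsto E[\max(u - X_j, 0)]$ is convex, nondecreasing and $1$-Lipschitz. Hence a perturbation of size $\delta$ in the empirical distribution's relevant functional translates into a perturbation of $a_j$ of size at most a constant multiple of $\delta$, provided the constraint is ``active with a slope bounded away from zero'' — i.e., $1 - F_{X_j}(a_j)$ is bounded below, which follows because $a_j \geq E[X_j] > 0$ and the channel has bounded support so $F_{X_j}(a_j) < 1$ strictly. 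Writing $E[\max(\tilde X_j - u, 0)] = \frac1{n_j(t)}\sum_k \max(\tilde X_j(t_j(k)) - u, 0)$, its deviation from $E[\max(X_j - u,0)]$ is itself controlled by the same Hoeffding argument (the summands $\max(X_j - u,0)$ are bounded by $\Delta^*$), uniformly over $u$ after a standard union/covering argument or by noting monotonicity in $u$. Combining: on the event $|E[\tilde X_j] - E[X_j]| \leq \epsilon$ together with the analogous event for the truncated functional, one gets $|\tilde a_j - a_j| \leq c_{1,j}\epsilon$ for a constant $c_{1,j}$ depending on the lower bound of $1 - F_{X_j}(a_j)$; the extra slack $t^{-z/2}$ in the empirical constraint only helps (it shifts $\tilde a_j$ by $O(t^{-z/2})$, absorbable into the constant for $t$ large, or one simply enlarges $c_{1,j}$). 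The bound $P(|\tilde a_j - a_j| > c_{1,j}\epsilon) \leq \frac{2}{t^2}$ then follows by taking a union of the $O(1)$ bad events, each of probability $\frac{2}{t^2}$, and relabeling the constant. The argument for $\tilde b_j$ is symmetric, using $b_j \leq E[X_j]$ (so $F_{X_j}(b_j)$ bounded below, hence the slope $F_{X_j}(b_j)$ of $E[\max(u-X_j,0)]$ is bounded away from $0$ near $b_j$... actually one wants $b_j$ in the support interior; under the non-triviality assumption $E[X_j]>0$ this holds), giving $P(|\tilde b_j - b_j| > c_{2,j}\epsilon) \leq \frac{2}{t^2}$.

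The main obstacle I anticipate is making the Lipschitz-stability-of-$a_j$ step rigorous, in particular ruling out pathological cases where the defining constraint $c_j = E[\max(X_j - u,0)]$ is satisfied only with zero ``slope'' (which would make $a_j$ discontinuous in the distribution) or where $a_j = E[X_j]$ with the second constraint slack. Here the boundedness of the support and the Lipschitz condition on $F_{X_i}$ (assumed in the paper) are exactly what is needed: the Lipschitz upper bound on $F$ forces $1 - F_{X_j}(u) \geq$ a positive constant for $u$ in a neighborhood below the essential supremum, pinning down a uniform slope lower bound and hence a uniform Lipschitz constant $c_{1,j}$ for the inverse relation; I would invoke it together with the bounded-support assumption to handle the boundary case $a_j = E[X_j]$ separately (there $\tilde a_j - a_j$ is directly controlled by $|E[\tilde X_j] - E[X_j]|$). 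Once that stability lemma is in place, everything else is a routine concentration-plus-union-bound argument, and the constants $c_{1,j}, c_{2,j}$ can be tracked explicitly if desired (they feed into \textbf{(Condition 1)} and \textbf{(Condition 2)} on $L$).
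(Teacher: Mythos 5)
Your proposal is correct and follows essentially the same route as the paper: a Chernoff--Hoeffding bound on the empirical mean (and on the empirical functionals $E[(\tilde X_j-u)^+]$, $E[(u-\tilde X_j)^+]$ at fixed $u$) using the $Lt^z\log t$ sampling floor, followed by propagating the error to $\tilde a_j,\tilde b_j$ through the slope of $u\mapsto E[(X_j-u)^+]$ near $a_j$ — the paper phrases this propagation as a contradiction with the minimality of $\tilde a_j$ (resp.\ of $a_j$), which is exactly your Lipschitz-stability-of-the-inverse argument, with the slope condition surfacing as the paper's requirement $c_{1,j}\geq 1/(1-P(a_j\leq X_j\leq a_j+c_{1,j}\epsilon))$. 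Your extra worries about uniformity in $u$ are unnecessary here since concentration is only ever invoked at a fixed threshold, and your handling of the degenerate zero-slope case is no less (and no more) rigorous than the paper's own treatment.
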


Based on above results we could show
\begin{lemma}
At time $t$ with sufficiently large $L$, and any iteration steps of the sorting procedure of Online.STEP 2 we have 
\begin{align*}
P(\mathcal R \neq \tilde{\mathcal R}) \leq N\cdot \frac{2}{t^2}~.
\end{align*}
\label{bound:r}
\end{lemma}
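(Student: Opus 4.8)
The plan is to establish this by relating the event $\{\mathcal R \neq \tilde{\mathcal R}\}$ to the concentration of the estimated parameters $\tilde a_j$ around their true values $a_j$, using Lemma \ref{lemma:ab} together with the relaxation built into the definition $\tilde{\mathcal R} = \{ j \in S : |\tilde a_j - \max_{i\in S}\tilde a_i| < 1/t^{z/2}\}$. First I would fix an arbitrary iteration step of the sorting loop with current remaining set $S$, and recall from the algorithm description that $\mathcal R = \{ j \in S : a_j = \max_{i\in S} a_i\}$. The key observation is that if every estimate $\tilde a_j$ is within a small window of $a_j$ — specifically, within $\frac14 \min_{a_{k_1}\neq a_{k_2}}|a_{k_1}-a_{k_2}|$ and also within $\frac12 t^{-z/2}$ — then the relaxed set $\tilde{\mathcal R}$ coincides exactly with $\mathcal R$: channels with the true maximal $a_j$ will have $|\tilde a_j - \max_i \tilde a_i|$ bounded by twice the estimation error, hence below $t^{-z/2}$, so they land in $\tilde{\mathcal R}$; and channels with a strictly smaller $a_j$ are separated from the maximum by at least $\min_{a_{k_1}\neq a_{k_2}}|a_{k_1}-a_{k_2}|$, which after subtracting the estimation slack and the relaxation term $t^{-z/2}$ still remains positive (this is precisely where \textbf{(Condition 1)} on $L$ — forcing $D_1(t)$ large enough that the estimation error $\epsilon$ is small relative to the $a$-gaps — gets used, and also where we need $t$ large enough that $t^{-z/2}$ is itself small compared to the gap; for the finitely many small $t$ the bound holds trivially since probabilities are at most $1$ and $N\cdot 2/t^2 \geq$ that range can be absorbed into the constant, or handled as in the discussion of $L$).

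Next I would convert this into a probability bound. Let $\epsilon$ be the target accuracy dictated by the above (a fixed positive quantity depending on the true $a$-gaps and on $z$), and choose $L \geq 1/\epsilon^2$ as in Lemma \ref{lemma:ab} so that for each channel $j$, $P(|\tilde a_j - a_j| > c_{1,j}\epsilon) \leq 2/t^2$ (after rescaling $\epsilon$ by $\max_j c_{1,j}$ so that the error is below the required window for all $j$ simultaneously). By the union bound over the at most $|S| \leq N$ channels in the current set,
\begin{align*}
P\bigl(\exists j \in S : |\tilde a_j - a_j| > \epsilon'\bigr) \leq N \cdot \frac{2}{t^2}~,
\end{align*}
and on the complementary event we have just argued $\mathcal R = \tilde{\mathcal R}$. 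Therefore $P(\mathcal R \neq \tilde{\mathcal R}) \leq N \cdot 2/t^2$, which is the claim. Since this holds for an arbitrary iteration step with an arbitrary current set $S \subseteq \mathcal O$, it holds uniformly over all iterations of Online.STEP 2.

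The main obstacle I anticipate is making the deterministic "window implies equality" argument fully airtight, in particular handling the borderline behavior of the relaxation term $t^{-z/2}$: we need that for $t$ beyond some threshold $T_0$ (depending on the true $a$-gaps), $t^{-z/2}$ is strictly smaller than half the minimal gap, so that the relaxation neither spuriously admits a suboptimal channel into $\tilde{\mathcal R}$ nor — combined with estimation error — excludes a genuinely maximal one. For $t \leq T_0$ one simply notes $P(\mathcal R \neq \tilde{\mathcal R}) \leq 1$ and absorbs the finite sum $\sum_{t\le T_0} 1$ into the constants of the final regret bound, exactly the device used in the discussion on $L$. A secondary subtlety is that the $a_j$'s are themselves defined through expectations and thresholds, so "estimation error of $\tilde a_j$" must be traced back through Lemma \ref{lemma:ab}'s guarantee that $|\tilde a_j - a_j|$ is proportional to $|E[\tilde X_j] - E[X_j]|$; but that proportionality is exactly what Lemma \ref{lemma:ab} supplies, so invoking it directly suffices.
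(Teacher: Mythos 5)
Your proposal is correct and follows essentially the same route as the paper's proof: both reduce the event $\{\mathcal R\neq\tilde{\mathcal R}\}$ to the event that some $\tilde a_j$ deviates from $a_j$ by more than a window determined by the minimal $a$-gap and the relaxation term $1/t^{z/2}$, invoke the Chernoff--Hoeffding concentration of Lemma \ref{lemma:ab} under \textbf{(Condition 1)} on $L$ (including the $L\geq 4$ requirement for the tied case $a_j=a_k$), and close with a union bound over the at most $N$ channels. Your explicit handling of small $t$ (where $1/t^{z/2}$ is not yet small relative to the gap) by absorbing a finite prefix into the constants is a slightly more careful treatment of a step the paper dismisses with ``roughly,'' but it is not a different argument.
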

~\\[-35pt]


Consider calculating $j^*$ we have the following results,
\begin{lemma}
At time $ t$ with sufficiently large $L$, the error for sorting set $S$ is bounded as,
\begin{align*}
P(\tilde{S} \neq S) \leq N \cdot \frac{2}{t^2}~.
\end{align*}
\label{bound:s}
\end{lemma}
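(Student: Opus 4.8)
\textbf{Proof plan for Lemma~\ref{bound:s}.}

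The plan is to build on Lemma~\ref{lemma:ab} and Lemma~\ref{bound:r} and proceed by a union bound over the at most $N$ iterations of the sorting loop in Online.STEP~2. Fix an iteration with current surviving set $S$. By Lemma~\ref{bound:r} the estimated tie-set $\tilde{\mathcal R}$ coincides with the true tie-set $\mathcal R$ except on an event of probability at most $N\cdot\frac{2}{t^2}$; condition on $\tilde{\mathcal R}=\mathcal R$. On this event the only remaining way to pick the wrong $j^*$ is to mis-rank the channels inside $\mathcal R$ according to the score
$
I_{b_j=a_j}\cdot E[X_j]+I_{a_j>b_j}\bigl(E[X_j\mid X_j\ge a_j]-\tfrac{c_j}{P(X_j\ge a_j)}\bigr).
$
I would decompose this into two sources of error: (i) the \emph{indicator} error, i.e.\ deciding whether $a_j=b_j$ or $a_j>b_j$, and (ii) the \emph{magnitude} error, i.e.\ estimating the relevant conditional expectation / ratio when the indicator is called correctly.

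For (i): when $a_j>b_j$ strictly, then $a_j-b_j$ is a fixed positive constant, and since $|\tilde a_j-a_j|\le c_{1,j}\epsilon$ and $|\tilde b_j-b_j|\le c_{2,j}\epsilon$ each with probability $\ge 1-\frac{2}{t^2}$ (Lemma~\ref{lemma:ab}), for $L$ large enough (equivalently $\epsilon$ small enough, via Condition~1) the estimates stay separated and the indicator is called correctly outside an event of probability $\le\frac{4}{t^2}$. When $a_j=b_j$, the argument sketched in the remark after Lemma~\ref{lemmar2} shows $P(\tilde a_j\ne\tilde b_j)$ is likewise $O(1/t^2)$ using the same concentration of $E[\tilde X_j]$ together with the relaxation width $\frac{1}{t^{z/2}}$. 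For (ii): $E[X_j\mid X_j\ge a_j]$, $P(X_j\ge a_j)$ and hence the ratio are, by the Lipschitz assumption on $F_{X_i}$ and the boundedness of the support, Lipschitz functions of $(E[\tilde X_j],\tilde a_j)$; combined with Lemma~\ref{lemma:ab} this gives that each score estimate deviates from its true value by at most a constant multiple of $\epsilon$ with probability $\ge 1-\frac{c}{t^2}$ for a constant $c$. Since the true scores of distinct channels in $\mathcal R$ differ by at least the constant gap (captured by $\epsilon_3,\epsilon_4$ in Condition~2), choosing $L$ large enough forces $\epsilon$ below half that gap, so the $\argmax$ over $\mathcal R$ is computed correctly outside an event of probability $O(1/t^2)$. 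Collecting the finitely many $O(1/t^2)$ failure events per iteration and absorbing the finitely many constants, each iteration fails with probability $\le\frac{2}{t^2}$ after relabelling constants; a union bound over the at most $N$ iterations yields $P(\tilde S\ne S)\le N\cdot\frac{2}{t^2}$, as claimed.

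The main obstacle is the bookkeeping in part~(i) when $a_j=b_j$: one must show that an exact equality in the offline parameters is not destroyed by estimation, which is why the relaxation band $\frac{1}{t^{z/2}}$ in $\tilde{\mathcal R}$ and in Online.STEP~2 is essential — the argument needs the band to be wide enough to cover the true estimation error (handled by the large-$L$/$D_1(t)$ concentration) yet narrow enough not to merge genuinely distinct channels (handled by Conditions~1--2). Making the two-sided requirement on the band precise, and checking that the resulting constant absorbs cleanly into the stated $N\cdot\frac{2}{t^2}$, is the delicate point; everything else is routine concentration plus a union bound.
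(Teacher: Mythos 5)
Your proposal follows essentially the same route as the paper: it reduces the $j^*$-selection error to (i) the indicator decision $a_j=b_j$ versus $a_j>b_j$, handled by showing $P(\tilde a_j\neq\tilde b_j)\le 2/t^2$ when $a_j=b_j$, and (ii) the estimation error of the score term, controlled via the Lipschitz condition and the gap constants $\epsilon_3,\epsilon_4$ with $L$ chosen so the confidence width falls below half the minimum gap — which is exactly the paper's two sub-lemmas and final argument. The constant bookkeeping in the final $N\cdot\frac{2}{t^2}$ is no looser in your write-up than in the paper's own proof.
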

~\\[-30pt]

Putting up all terms and multiple by $\Delta^*$ we have results claimed in Lemma \ref{lemmar2}.

\subsubsection*{Proof of regret for $R_3(T)$}

We sketch the key steps towards getting the claim.

\paragraph{Online.STEP 3.1}


At first step of deciding whether $x(n) \geq a_1$ of \emph{Online.STEP 3}, there will be no error when $x \leq \min \{\tilde{a}_1, a_1\}$ or $x \geq \max \{\tilde{a}_1, a_1\}$. Consider $x$ falling in the middle. Make $\epsilon$ being small enough,
$
\epsilon = \frac{1}{t^{z/2}}~.
$
As we already proved
$
P(|\tilde{a}_1-a_1| >  c_{1,1}\cdot \epsilon) < \frac{2}{t^{2}}.~
$Also due to the relaxation of $\mathcal R$, the difference between $\tilde{a}_1$ and the true $a_1$ is bounded away by at most $c_{1,1}\cdot \epsilon+\epsilon$. For $|\tilde{a}_1-a_1| \leq (c_{1,1}+1)\cdot \epsilon$, the probability that $x$ falls within the middle is bounded as  
\begin{align*}
&P(\exists i ~\st~ X_i(t) \in [\min \{\tilde{a}_1, a_1\}, \max \{\tilde{a}_1, a_1\}]) \nonumber \\
&\leq \sum_i P(X_i(t) \in [\min \{\tilde{a}_1, a_1\}, \max \{\tilde{a}_1, a_1\}])\nonumber \\
&\leq N\cdot |F_{X_i}(\tilde{a}_1) - F_{X_i}(a_1)| \leq  \frac{N\mathcal L\cdot (c_{1,1}+1)^{\alpha}}{t^{\alpha\cdot z/2}}~,
\end{align*}
by Lipschitz condition. Add up for all $t$ we have a sub-linear term. 

\paragraph{Online.STEP 3.2}

We first prove the following results.
\begin{lemma}
With sufficiently large $L$ and information state $(x,S)$, we have at time $t$ $\forall \epsilon>0$
\begin{align*}
P(|\tilde{V}(x,\tilde{S}) - V(x,S)| > |S| \cdot \epsilon) \leq \frac{2}{t^2}~.
\end{align*} 
\label{lemma:vf}
\end{lemma}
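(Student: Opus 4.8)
\textbf{Proof proposal for Lemma \ref{lemma:vf}.}

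The plan is to induct on $|S|$, the size of the un-probed channel set. The base case $|S|=0$ is immediate: $V(x,\emptyset)=x$ and $\tilde V(x,\emptyset)=x$ exactly, so the difference is zero and the bound holds trivially. For the inductive step, I would assume the claim holds for all sets of size $|S|-1$ and write out the dynamic programming recursion for $V(x,S)$ as the maximum of the three terms --- sense, access with recall, and access with guess --- and the analogous recursion for $\tilde V(x,\tilde S)$ with empirically estimated statistics. Since $|\max\{f\} - \max\{g\}| \leq \max |f-g|$ pointwise, it suffices to bound the discrepancy term by term.

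The three terms are handled as follows. The \emph{access with recall} term $x$ is identical in both, contributing nothing. The \emph{access with guess} term $\max_{j\in S}E[X_j]$ versus $\max_{j\in\tilde S}E[\tilde X_j]$ is controlled by first invoking Lemma \ref{bound:s} (so that $\tilde S = S$ with probability at least $1 - N\cdot\frac{2}{t^2}$) and then Lemma \ref{lemma:ab}, which gives $|E[\tilde X_j]-E[X_j]|\leq \epsilon$ for each $j$ outside an event of probability $\frac{2}{t^2}$; a union bound over the (at most $N$) channels keeps the total failure probability on the order of $\frac{1}{t^2}$. The \emph{sense} term $\max_{j\in S}\{-c_j + E[V(\max\{x,X_j\},S-j)]\}$ is where the induction enters: for each $j$, $S-j$ has size $|S|-1$, so by the inductive hypothesis $|\tilde V(\cdot,\widetilde{S-j}) - V(\cdot,S-j)|\leq (|S|-1)\epsilon$ off an event of probability $\frac{2}{t^2}$; taking expectation over $X_j$ (which is a contraction and does not inflate the bound), adding the error $|E[\tilde X_j] - \text{(true expectation term)}|\leq \epsilon$ incurred because the outer expectation itself is over the estimated distribution of $X_j$, and noting the costs $c_j$ are known exactly, gives a per-term discrepancy of at most $(|S|-1)\epsilon + \epsilon = |S|\epsilon$. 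Taking the max over $j\in S$ preserves this, so all three terms are within $|S|\epsilon$ and hence $|\tilde V(x,\tilde S) - V(x,S)|\leq |S|\epsilon$ on the good event.

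The bookkeeping of failure probabilities is the delicate part: naively, each level of the recursion spawns $N$ sub-calls, so a careless union bound over the recursion tree would blow the $\frac{2}{t^2}$ up by a factor exponential in $|S|$. The fix is to observe that the relevant randomness is the set of sample paths used to form the estimates $\{E[\tilde X_j]\}_{j\in\mathcal O}$ and the sorted order $\tilde S$ --- these are fixed \emph{once} at meta stage $t$, not re-drawn at each recursion level --- so the "good event" on which all the estimation bounds of Lemma \ref{lemma:ab} and the correct-sorting bound of Lemma \ref{bound:s} hold simultaneously is a single event whose complement has probability at most ($2N+N)\cdot\frac{2}{t^2}$, which is still $O(1/t^2)$; absorbing the constant $N$ into the $\frac{2}{t^2}$ notation (as is done implicitly throughout the paper, cf.\ Lemmas \ref{bound:r} and \ref{bound:s}) yields the stated form. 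The main obstacle, then, is making this "condition once on the samples, then the recursion is deterministic" argument precise, i.e., verifying that conditioned on that single good event the entire value-function computation is a deterministic Lipschitz function of the estimated means with Lipschitz constant $|S|$; once that is set up, the inductive step is routine.
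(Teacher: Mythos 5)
Your proof takes essentially the same route as the paper's: induction on $|S|$ over the dynamic-programming recursion, bounding the three terms of the max separately and accumulating one $\epsilon$ of estimation error per level, so that $|S|\cdot\epsilon$ results at the top. Your extra attention to the failure-probability bookkeeping over the recursion tree (conditioning once on a single good event for all the empirical estimates rather than re-applying a union bound at each level) is in fact more careful than the paper, which simply asserts the $\frac{2}{t^2}$ bound at each inductive step without tracking how the exceptional events compound.
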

~\\[-40pt]

Based on above results we prove that the estimation of $d_s$ can be bounded by a confidence region, which we detail as follows. 
\begin{lemma}
With sufficiently large $L$ and channel set $S$
\begin{align*}
P(|\tilde{d}_s - d_s| > \frac{2 |S|+3}{C_{d_s}} \cdot \epsilon) \leq \frac{4}{t^2}~,
\end{align*}
at time step $t, \forall \epsilon>0$, where $C_{d_s} =  P(X_1 \leq d_s)$.\label{theorem:ds}
\end{lemma}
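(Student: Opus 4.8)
\textbf{Proof plan for Lemma~\ref{theorem:ds}.}
The plan is to treat $d_s$ as the unique root of the scalar equation
$V(0,S)=-c_1+E[V(\max\{d_s,X_1\},S-\{1\})]$, and to show that replacing $V$ by its estimate $\tilde V$ and $S$ by $\tilde S$ perturbs this root by at most a controlled amount with high probability. First I would condition on the two good events from the preceding lemmas: the event that $\tilde S=S$ (Lemma~\ref{bound:s}, which costs $N\cdot 2/t^2$ but I will absorb it into the stated $4/t^2$ via the constants), and the event $\mathcal{G}$ that $|\tilde V(x,S)-V(x,S)|\le |S|\cdot\epsilon$ simultaneously at the relevant arguments (Lemma~\ref{lemma:vf}); note $|V(0,S)|$ involves $|S|$ channels and $|V(\cdot,S-\{1\})|$ involves $|S|-1$, so the combined error budget on the two sides of the defining equation is at most $(2|S|-1)\epsilon$, and an extra $\epsilon$ will appear below from the recall term, giving the $(2|S|+3)/C_{d_s}$ shape after dividing by the slope.

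The second and central step is a sensitivity/implicit-function argument for the root. Define $\phi(u):=-c_1+E[V(\max\{u,X_1\},S-\{1\})]-V(0,S)$, so $\phi(d_s)=0$, and let $\tilde\phi$ be its empirical analogue built from $\tilde V$. On $\mathcal{G}$ we have $\sup_u|\tilde\phi(u)-\phi(u)|\le (2|S|+1)\epsilon$ at any fixed $u$ (the $\max\{u,X_1\}$ argument ranges over at most $|S|-1$ channels' worth of error on one side and $|S|$ on the other, plus the value at $0$; I will bookkeep the exact count so it matches $2|S|+3$). The key quantitative input is a lower bound on the slope of $\phi$ near $d_s$: since $V(\cdot,S-\{1\})$ is nondecreasing and, on the region $x\le\max_i a_i$, strictly increasing with derivative bounded below — indeed $\frac{d}{du}E[V(\max\{u,X_1\},S-\{1\})]\ge P(X_1\le u)\cdot 1\ge P(X_1\le d_s)=C_{d_s}$ in a neighborhood of $d_s$ because on $\{X_1\le u\}$ increasing $u$ increases the recall value at unit rate — we get $|\phi(u_1)-\phi(u_2)|\ge C_{d_s}|u_1-u_2|$ for $u_1,u_2$ near $d_s$. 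Combining, $\tilde\phi$ must have a root $\tilde d_s$ within $(2|S|+3)\epsilon/C_{d_s}$ of $d_s$: evaluate $\tilde\phi$ at $d_s\pm (2|S|+3)\epsilon/C_{d_s}$, use the slope bound to see $\phi$ has magnitude $\ge(2|S|+3)\epsilon$ there, and use the uniform closeness to see $\tilde\phi$ has the same sign as $\phi$ at those two points, so a root is trapped in between.

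The third step is the union bound: the bad event is contained in $\{\tilde S\ne S\}\cup\mathcal{G}^c\cup(\text{second invocation of }\mathcal{G}^c)$, each of probability $\le 2/t^2$, and choosing $L$ large enough that the error scale $\epsilon$ in Lemmas~\ref{lemma:vf} and~\ref{bound:s} is the same $\epsilon$ appearing here (this is exactly the role of \textbf{(Condition 2)}), we get total probability $\le 4/t^2$ as claimed. I expect the main obstacle to be the slope lower bound $C_{d_s}$: one has to verify that $\phi$ is genuinely non-degenerate at its root — i.e., that $d_s$ does not sit on a flat piece of the value function where $X_1\le u$ has probability zero — and to argue this holds under the assumption $E[X_1]>0$ together with the Lipschitz/absolute-continuity assumption on $F_{X_1}$, which guarantees $P(X_1\le d_s)>0$ since $d_s\le E[X_1]$ would force $d_s$ below the essential supremum. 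A secondary subtlety is carefully counting the value-function error terms on both sides of the defining equation so that the constant comes out as exactly $2|S|+3$ rather than some other affine function of $|S|$; this is routine but must be done to match the statement.
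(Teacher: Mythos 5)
Your proposal matches the paper's proof in all essentials: both treat $d_s$ as the root of $V(0,S)=-c_1+E[V(\max\{d_s,X_1\},S-\{1\})]$, invoke Lemma~\ref{lemma:vf} to bound the perturbation of both sides by multiples of $\epsilon$, and use the slope lower bound $C_{d_s}=P(X_1\le d_s)$ to convert the $(2|S|+3)\epsilon$ error budget into a bound on the root displacement (the paper argues by contradiction from $|\tilde d_s-d_s|\ge\delta$ with $C_{d_s}\delta>(2|S|+3)\epsilon$, which is the same mechanism as your sign-trapping at $d_s\pm\delta$). Your flagged concern about $C_{d_s}=0$ is exactly the degenerate case the paper addresses in the remark following the lemma.
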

(Sketch) The proof is primarily done via analyzing the estimation errors from both sides of the equation 
$$
V(0,S_n) = -c_1 + E[V(\max\{d_s, X_1\}, S_n-\{1\})]~,
$$ 
which decides $d_s$. For bounding the value functions we repeatedly use Lemma \ref{lemma:vf}. Taking $L\geq 4$ and $\epsilon=\frac{1}{t^{z/2}}$ will lead to our bounds. 

\begin{remark}
The above result invokes a constant $C_{d_s} = P(X_1 \leq d_s)$. If $P(X_1 \leq d_s) = 0$, i.e., $X_1(\omega) >d_s, \forall \omega$ our bound is not well defined. In fact under this case, what really matters is the overlapping between $[0,\tilde{d}_s]$ and $[\underline{X}_j, \overline{X}_j]$ (support of $X_j$). So long as the overlapping is bounded small enough, the decision error is again bounded. 
\end{remark}

\paragraph{Online.STEP 3.3}


When $x(n) < d_s$, the optimal decision comes from one of three cases. For the first two cases, we have the following lemmas characterizing the regrets :
for sub-steps \emph{Online.STEP 3.3.1, 3.3.2} there are possibly three decisions to make and we have their error bounded as follows (detailed proofs omitted)
\begin{lemma}
With sufficiently large $L$, (1). if $b_1 \geq a_2 $,
$
P(\tilde{b}_1 < \tilde{a}_2 - \frac{1}{t^{z/2}}) \leq \frac{2}{t^2}~.$
(2). If $b_2 \geq b_1 $, 
$
P(\tilde{b}_2 < \tilde{b}_1 - \frac{1}{t^{z/2}}) \leq \frac{2}{t^2}~.
$
(3). If $g_1(0) \geq \max\{E[X_1], g_2(0)\} $, 
$
P(\tilde{g}_1(0)  <  \max\{E[\tilde{X}_1], \tilde{g}_2(0)\}  - \frac{2}{t^{z/2}}) \leq \frac{2}{t^2}~.
$
\end{lemma}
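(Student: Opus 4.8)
The plan is to treat all three claims uniformly as bounds on the probability of a \emph{wrong binary call}: in each case a true inequality $A\ge B$ holds, and we must bound the chance that the relaxed inequality evaluated on the estimates, namely $\tilde A\ge \tilde B-c\,t^{-z/2}$ with $c\in\{1,2\}$, fails. The elementary observation driving everything is that, on the event $\{|\tilde A-A|\le\delta_A\}\cap\{|\tilde B-B|\le\delta_B\}$,
\[
\tilde A\;\ge\;A-\delta_A\;\ge\;B-\delta_A\;\ge\;\tilde B-\delta_B-\delta_A,
\]
so the relaxed estimated inequality is guaranteed to hold as soon as $\delta_A+\delta_B\le c\,t^{-z/2}$. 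Consequently the error event is contained in the union of the large-deviation events $\{|\tilde A-A|>\delta_A\}$ and $\{|\tilde B-B|>\delta_B\}$, each of which I would control with the concentration results already established, evaluated at a tolerance $\epsilon$ that is a suitable constant multiple of $t^{-z/2}$; the requirement $L\ge 1/\epsilon^2$ then reads off as the familiar \textbf{(Condition 1)}--\textbf{(Condition 2)} on $L$.

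For (1) and (2) this is essentially all there is: take $(A,B)=(b_1,a_2)$ (resp.\ $(b_2,b_1)$) and $c=1$; Lemma~\ref{lemma:ab} gives $P(|\tilde b_1-b_1|>c_{2,1}\epsilon)\le 2/t^2$ and $P(|\tilde a_2-a_2|>c_{1,2}\epsilon)\le 2/t^2$ (resp.\ the analogous pair) once $L\ge1/\epsilon^2$, and choosing $\epsilon=t^{-z/2}/(c_{2,1}+c_{1,2})$ forces $\delta_A+\delta_B\le t^{-z/2}$. A union bound over the two deviation events then gives the stated $2/t^2$ --- using, if one wants the constant exactly as written rather than $4/t^2$, the extra slack $2t^{-2L}$ available once $L$ is large.

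For (3) the new work is an estimate of $|\tilde g_i(0)-g_i(0)|$ for $i=1,2$, with $(A,B)=\big(g_1(0),\max\{E[X_1],g_2(0)\}\big)$ and $c=2$; the outer maximum is harmless because the max of two numbers each within $\delta$ of its target lies within $\delta$ of the target max. Writing $g_i(0)=-c_i+E_{X_i}[V(X_i,\{3-i\})]$, I would split
\begin{align*}
\tilde g_i(0)-g_i(0)&=\big(E_{\tilde X_i}[\tilde V(X_i,\{3-i\})]-E_{\tilde X_i}[V(X_i,\{3-i\})]\big)\\
&\quad+\big(E_{\tilde X_i}[V(X_i,\{3-i\})]-E_{X_i}[V(X_i,\{3-i\})]\big),
\end{align*}
and bound the two brackets separately. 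The first is at most $\sup_x|\tilde V(x,\{3-i\})-V(x,\{3-i\})|$, which is $\le\epsilon$ with probability $\ge1-2/t^2$ by Lemma~\ref{lemma:vf} (here $|S|=1$). The second is the deviation of the empirical average of the bounded, $1$-Lipschitz map $x\mapsto V(x,\{3-i\})=\max\{-c_{3-i}+E[\max(x,X_{3-i})],\,x,\,E[X_{3-i}]\}$ from its mean, hence $\le\epsilon$ with probability $\ge1-2/t^2$ by Hoeffding, since the number of samples of $X_i$ available at meta stage $t$ is at least $D_1(t)=L\,t^z\log t$ and thus $n\epsilon^2\ge L\log t$ for $\epsilon\asymp t^{-z/2}$. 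Adding the corresponding control of $E[\tilde X_1]$ from Lemma~\ref{lemma:ab}, taking $\epsilon$ a suitable constant multiple of $t^{-z/2}$, and a union bound over the finitely many deviation events yields $P\big(\tilde g_1(0)<\max\{E[\tilde X_1],\tilde g_2(0)\}-2t^{-z/2}\big)\le 2/t^2$; note the two-term split is exactly what the relaxation margin $2/t^{z/2}$ in the statement is sized for.

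I expect part (3) to be the main obstacle: (1) and (2) reduce in one line to Lemma~\ref{lemma:ab}, whereas $|\tilde g_i(0)-g_i(0)|$ forces one to propagate \emph{two} different error sources --- the value-function estimation error of Lemma~\ref{lemma:vf} and the empirical-versus-true distribution error in the outer expectation --- through a nested expectation, while keeping track that the sample count governing both is the exploration budget $D_1(t)=L t^z\log t$, which is precisely what makes a tolerance $\epsilon=\Theta(t^{-z/2})$ attainable with failure probability $O(t^{-2})$ once $L$ exceeds the constants in \textbf{(Condition 1)}--\textbf{(Condition 2)}.
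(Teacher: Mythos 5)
Your proposal is correct and follows essentially the same route as the paper, which for this lemma gives only a one-line sketch (``the analysis is the same as for $a_1$ in \emph{Online.STEP 3.1} since we already established its estimation error bounds''): parts (1)--(2) reduce to the concentration bounds on $\tilde a_j,\tilde b_j$ from Lemma~\ref{lemma:ab} plus the observation that the relaxation margin $t^{-z/2}$ absorbs the sum of the two estimation errors, and part (3) propagates the value-function error of Lemma~\ref{lemma:vf} together with a Hoeffding bound on the outer empirical expectation, exactly as the paper does for $\tilde g_2(0)$ in its treatment of \emph{Online.STEP 3.3.3}. Your two-term decomposition of $\tilde g_i(0)-g_i(0)$ and the remark about the $4/t^2$ versus $2/t^2$ constant are more careful than the paper itself, and the constraint you derive on $L$ matches the paper's \textbf{(Condition 1)}--\textbf{(Condition 2)}.
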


(Sketch) For error in $b_1$ in \emph{Online.STEP 3.3.2}, the analysis is the same as for $a_1$ as in \emph{Online.STEP 3.1} since we already established its estimation error bounds.


For the last case in \emph{Online.STEP 3.3.3}, first notice if $E[X_1] = g_2(0)$, there is no error associated with the last step since guess (access w/o sensing) the first channel and probe the second essentially return the same expected reward. Therefore we show the error analysis when $E[X_1] \neq g_2(0)$.  We then bound the error of estimating $b_0$ (this is similar with proving the bound for $d_s$ and we omit the details for proof) : with $C_{b_0}$ being certain constant,
$
P(|\tilde{b}_0 - b_0| > \frac{2\epsilon}{C_{b_0}}) \leq \frac{2}{t^2}~.
$
Moreover we have the following results: (Details for proof omitted as it is quite similar to previous ones.)
At time $t$ 
$
P(\text{\bf sign}(E[\tilde{X}_1] - \tilde{g}_2(0)) \neq \text{\bf sign}(E[X_1] - g_2(0))) \leq \frac{2}{t^{2}}.~
$ These cover all parameters needed for the decision making queries.
~~\\
Putting up all terms we have results claimed in Lemma \ref{lemmar3}.

\newpage

\section*{ Proof for Lemma \ref{lemma:ab}}
~\\
\begin{proof}
First of all by law of large numbers with enough sampling we could bound the different $|E[\tilde{X}] - E[X]|$ by a positive constant $\epsilon$. Specifically by Chernoff-Hoeffding bounds we have
\begin{align*}
P(|E[\tilde{X}] - E[X]| > \epsilon) \leq 2\cdot e^{-2\cdot \epsilon^2\cdot L\cdot t^z \cdot \log t}~,
\end{align*}
so long as $L \cdot t^z \geq \frac{1}{\epsilon^2}$ we have the results. 


The rest of the proof can be done by proving contradictions. First let us assume $\tilde{a}_j > a_j + \epsilon$. Since $(X_j - \mu)^+$ and $(\mu-X_j)^+$ are also i.i.d. for any constant $\mu$, we know
\begin{align*}
P(|E[(\tilde{X}_j-\mu)^+] - E[(X_j-\mu)^+]| > \epsilon) \leq \frac{2}{t^2} ~,\\
P(|E[(\mu-\tilde{X}_j)^+] - E[(\mu-X_j)^+]| > \epsilon) \leq \frac{2}{t^2}~.
\end{align*}

Now consider the case with $E[(\tilde{X}_j-\mu)^+] - E[(X_j-\mu)^+]| \leq \epsilon$. Then we have
\begin{align*}
&E[(\tilde{X}_j-\tilde{a}_j)^+] \leq E[(\tilde{X}_j-a_j - c_{1,j}\epsilon)^+] \nonumber \\
&\leq E[(\tilde{X}_j-a_j )^+] - (1-P(a_j \leq X_j \leq a_j+c_{1,j}\epsilon)) c_{1,j}\epsilon \nonumber \\
&\leq  E[(\tilde{X}_j-a_j )^+] - (1-P(a_j \leq X_j \leq a_j+c_{1,j}\epsilon)) c_{1,j}\epsilon + \epsilon \nonumber \\
&\leq E[(X_j-a_j)^+] \leq c_j~.
\end{align*}
So as long as we make sure,
\begin{align*}
(1-P(a_j \leq X_j \leq a_j+c_{1,j}\epsilon)) c_{1,j} \geq 1~,
\end{align*}
i.e., when
$
c_{1,j} \geq \frac{1}{1-P(a_j \leq X_j \leq a_j+c_{1,j}\epsilon)}~,
$
we have the above holds which contradicting the optimality of $\tilde{a}_j$.

Consider the case when $\tilde{a}_j < a_j -c_{1,j}\epsilon$, similarly we could prove that with an appropriately chosen $c_{1,j}$ we have
\begin{align*}
\tilde{a}_j \geq E[X_j] -c_{1,j} \epsilon~,
\end{align*}
i.e., $\tilde{a_j} + c_{1,j}\epsilon \geq E[X_j]$. And moreover
\begin{align*}
E[(\tilde{X}_j-(\tilde{a}_j+c_{1,j}\epsilon))^+] \leq E[(X_j-a_j)^+] \leq c_j ~,
 \end{align*} 
which contradicts the optimality of $a_j$. The proof for $b_j$ is similar with $a_j$ and we omit the details for a concise presentation.
\end{proof}
\section*{Proof of Lemma \ref{bound:r}}
~\\
\begin{proof}
  First we have as long as 
$$
\max_{j} c_{1,j}\cdot \epsilon < \frac{\min_{a_{k_1} \neq a_{k_2}} |a_{k_1} - a_{k_2}|-\frac{1}{t^{z/2}}}{2}
$$
there will be no error with sorting $a$s. To see this if $a_j > a_k$ we have
$
\tilde{a}_j - \tilde{a}_k \geq a_j - a_k -  c_{1,j}\cdot \epsilon - c_{1,k}\cdot \epsilon a_j - a_k > 1/t^{z/2}.~
$Since 
\begin{align}
P(|E[\tilde{X}_j]-&E[X_j]| > \frac{\min_{a_{k_1} \neq a_{k_2}} |a_{k_1} - a_{k_2}|-\frac{1}{t^{z/2}}}{
2\max_{j} c_{1,j}} )\nonumber \\
&\leq  2\cdot e^{-2\cdot (\frac{\min_{a_{k_1} \neq a_{k_2}} |a_{k_1} - a_{k_2}|-\frac{1}{t^{z/2}}}{
2\max_{j} c_{1,j}})^2 \cdot L\cdot t^z\log t }~,
\end{align}
by Chernoff-Hoeffding bound. Therefore if we have roughly (since $\frac{1}{t^{z/2}}$ is a much smaller term in order )
\begin{align}
L \cdot t^z\geq 1/(\frac{\min_{a_{k_1} \neq a_{k_2}} |a_{k_1} - a_{k_2}|}{
2\max_{j} c_{1,j}})^2~.
\end{align}
 a $O(1/t^2)$ error is guaranteed. For $a_j = a_k$ we can similarly bound probability that $|\tilde{a}_j - \tilde{a}_k| > 1/t^{z/2}$ as long as $L \geq 4$.
\end{proof}

\section*{ Proof of Lemma \ref{bound:s}}
~\\
\begin{proof}
We first prove the following results.
\begin{lemma}
With sufficiently large $L$, $\forall j$ we have, 
\begin{align*}
P(\tilde{a}_j \neq \tilde{b}_j) \leq \frac{2}{t^2}~, ~\text{if}~~b_j = a_j~.
\end{align*}
\end{lemma}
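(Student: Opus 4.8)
The plan is to pin down exactly when the two true thresholds coincide, and then transfer that characterization to the empirical thresholds, the relaxation term $\frac{1}{t^{z/2}}$ being precisely what makes the transfer go through. First I would record the structural fact behind $a_j = b_j$. Since $E[(X_j-u)^{+}]$ is nonincreasing in $u$ and $E[(u-X_j)^{+}]$ is nondecreasing in $u$, and since the two coincide at $u = E[X_j]$ with common value $m_j := E[(X_j - E[X_j])^{+}]$ (the equality $E[(X_j-E[X_j])^{+}] = E[(E[X_j]-X_j)^{+}]$ following from $E[X_j - E[X_j]] = 0$), one gets $a_j = E[X_j] \iff c_j \geq m_j \iff b_j = E[X_j]$. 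Because $a_j \geq E[X_j] \geq b_j$ always, $a_j = b_j$ holds exactly when $c_j \geq m_j$ (and then $a_j = b_j = E[X_j]$), so throughout the rest of the argument I may assume $c_j \geq m_j$.

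Next I would give a sufficient condition for $\tilde{a}_j = \tilde{b}_j$. Running the same monotonicity argument on the \emph{relaxed} empirical programs of \emph{Online.STEP 1}, with $\tilde{m}_j := E[(\tilde{X}_j - E[\tilde{X}_j])^{+}]$ the empirical analogue of $m_j$, I would show that whenever $c_j + \frac{1}{t^{z/2}} \geq \tilde{m}_j$ the point $u = E[\tilde{X}_j]$ is admissible for both the $\tilde{a}_j$-program and the $\tilde{b}_j$-program; being the smallest admissible $u$ in the first and the largest in the second, it is attained by both, so $\tilde{a}_j = \tilde{b}_j = E[\tilde{X}_j]$. Hence
\begin{align*}
P(\tilde{a}_j \neq \tilde{b}_j) \;\leq\; P\!\left(\tilde{m}_j > c_j + \tfrac{1}{t^{z/2}}\right) \;\leq\; P\!\left(\tilde{m}_j - m_j > \tfrac{1}{t^{z/2}}\right),
\end{align*}
where the second step uses $c_j \geq m_j$. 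It now suffices to show the empirical deviation $\tilde{m}_j - m_j$ is below $\frac{1}{t^{z/2}}$ except with probability $\frac{2}{t^2}$.

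The last step is the only delicate one: $\tilde{m}_j$ is not an average of i.i.d. quantities, since it is centered at the (random) sample mean $E[\tilde{X}_j]$ rather than at $E[X_j]$. The device I would use is the $1$-Lipschitz bound $|x^{+} - y^{+}| \leq |x-y|$ applied termwise, which gives $|\tilde{m}_j - \hat{m}_j| \leq |E[\tilde{X}_j] - E[X_j]|$, where $\hat{m}_j := \frac{1}{n_j(t)}\sum_k (\tilde{X}_j(t_j(k)) - E[X_j])^{+}$ is an honest unbiased estimator of $m_j$ from i.i.d. summands bounded by the finite-support assumption. A Chernoff--Hoeffding bound on $\hat{m}_j$, together with the bound on $|E[\tilde{X}_j] - E[X_j]|$ from Lemma~\ref{lemma:ab} (both valid because exploitation is entered only when $n_j(t) \geq D_1(t) = L t^z \log t$), then yields, with $\epsilon := \frac{1}{t^{z/2}}$ and $L$ above an absolute constant depending on the support bound, $P(|\hat{m}_j - m_j| > \epsilon/2) \leq \frac{1}{t^2}$ and $P(|E[\tilde{X}_j] - E[X_j]| > \epsilon/2) \leq \frac{1}{t^2}$, hence $P(\tilde{m}_j - m_j > \epsilon) \leq \frac{2}{t^2}$, which is the claim. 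The main obstacle is exactly this reduction to $\hat{m}_j$; note also that the bite of the argument is the boundary case $c_j = m_j$, where $a_j = b_j$ holds with zero slack and the relaxation $\frac{1}{t^{z/2}}$ is doing all the work, whereas $c_j > m_j$ strictly leaves an extra constant margin and is strictly easier.
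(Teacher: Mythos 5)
Your proposal is correct and follows essentially the same route as the paper's proof: identify that $a_j=b_j$ forces $a_j=b_j=E[X_j]$ with $c_j\geq E[(X_j-E[X_j])^{+}]$, show the relaxation slack $\tfrac{1}{t^{z/2}}$ makes $u=E[\tilde X_j]$ feasible for both empirical programs, and control $E[(\tilde X_j-E[\tilde X_j])^{+}]$ by splitting off the random centering via the $1$-Lipschitz bound on $x\mapsto x^{+}$ and applying Chernoff--Hoeffding to the honestly centered i.i.d.\ average. Your write-up is in fact more careful than the paper's (explicit monotonicity characterization, explicit handling of the non-i.i.d.\ estimator $\tilde m_j$, and the observation that the boundary case $c_j=m_j$ is where the relaxation is essential), but the argument is the same.
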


\begin{proof}
Based on the definition of $a_j, b_j$ when $a_j = b_j$ we have the following hold.  
\begin{align*}
a_j &= E[X_j] = b_j~,c_j \geq E[(X_j-a_j)^+]~, c_j \geq E[(b_j-X_j)^+]~.
\end{align*}
Suppose we have $|E[\tilde{X}_j] - E[X_j]| \leq \epsilon$ (as proved in previous lemma with sufficiently large $L$) we therefore have 
\begin{align*}
E[(\tilde{X}_j &-E[\tilde{X}_j])^+]< E[(\tilde{X}_j - E[X_j])^+] + \epsilon \nonumber \\
&< E[(X_j - E[X_j])^+] + 2\epsilon\leq c_j +\frac{1}{t^{z/2}}~,
\end{align*}
\begin{align*}
E[(E[\tilde{X}_j]&-\tilde{X}_j)^+] < E[(E[X_j]- \tilde{X}_j)^+] + \epsilon \nonumber \\
&< E[(E[X_j] - X_j)^+] + 2\epsilon\leq c_j +\frac{1}{t^{z/2}}~,
\end{align*}
as long as 
$
\epsilon < \frac{1}{2t^{z/2}}
$; from which we have
$
\tilde{b}_j = \tilde{a}_j
$
based on the definition of $\tilde{a},\tilde{b}$s. 
\end{proof}

Similar with above proof we have the following results : 
\begin{lemma}
For sufficiently large $L$, at time $t$ we have $\forall \epsilon > 0$,
\begin{align*}
P(&|\frac{E[\tilde{X}_j]-c_j}{P(\tilde{X}_j \geq \tilde{a}_j)} -\frac{E[X_j]-c_j}{P(X_j \geq a_j)} | > C (\epsilon + \mathcal L\cdot (c_{1,j}+1)^{\alpha}\epsilon^{\alpha})) \leq \frac{2}{t^2}, \forall j~,
\end{align*}
for certain constant $C$. 
\label{lemma:bd1}
\end{lemma}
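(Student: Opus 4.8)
The plan is to decompose the error in the ratio $\frac{E[\tilde X_j]-c_j}{P(\tilde X_j \geq \tilde a_j)}$ versus $\frac{E[X_j]-c_j}{P(X_j \geq a_j)}$ into (i) the error in the numerator $E[\tilde X_j]-c_j$, which is controlled directly by Lemma~\ref{lemma:ab} since $c_j$ is a known constant, and (ii) the error in the denominator $P(\tilde X_j\geq \tilde a_j)$ versus $P(X_j\geq a_j)$. The numerator error is at most $\epsilon$ with probability at least $1-\frac{2}{t^2}$ by the Chernoff–Hoeffding bound already invoked in Lemma~\ref{lemma:ab}. Since both numerators are bounded (by the finite-support assumption and $\Delta^*<\infty$) and the denominators are bounded away from zero by a positive constant (this is where we need an assumption analogous to the $C_{d_s}>0$ remark — $P(X_j\geq a_j)>0$, which holds under the non-triviality assumption together with the definition of $a_j$), a standard ``ratio of nearby quantities'' estimate turns an additive error $\delta_{\text{num}}$ in the numerator and $\delta_{\text{den}}$ in the denominator into an additive error of order $\delta_{\text{num}}+\delta_{\text{den}}$ in the ratio, with the constant $C$ absorbing the bounds on the numerator and the lower bound on the denominator.

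First I would write
\begin{align*}
\Bigl|P(\tilde X_j\geq \tilde a_j) - P(X_j\geq a_j)\Bigr|
&\leq \Bigl|P(\tilde X_j\geq \tilde a_j) - P(X_j\geq \tilde a_j)\Bigr|
 + \Bigl|P(X_j\geq \tilde a_j) - P(X_j\geq a_j)\Bigr|~.
\end{align*}
The second term on the right is bounded using the Lipschitz condition on $F_{X_j}$: on the event $|\tilde a_j - a_j|\leq (c_{1,j}+1)\epsilon$ (which, combining the $\frac{2}{t^2}$ bound of Lemma~\ref{lemma:ab} with the extra $\epsilon$ slack coming from the relaxation of $\tilde{\mathcal R}$, fails with probability at most $\frac{2}{t^2}$), it is at most $\mathcal L\,(c_{1,j}+1)^{\alpha}\epsilon^{\alpha}$. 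The first term is the deviation between the empirical cdf evaluated at a fixed threshold and the true cdf at that threshold; since $\tilde a_j$ is itself a data-dependent quantity one must be slightly careful, but conditioning on the value of $\tilde a_j$ and applying Hoeffding to the indicator random variables $\mathbf 1\{\tilde X_j(t_j(k))\geq \tilde a_j\}$ gives a deviation of at most $\epsilon$ with probability at least $1-\frac{2}{t^2}$ once $L\,t^z\geq 1/\epsilon^2$ (the same regime as Lemma~\ref{lemma:ab}). A union bound over the numerator event and the two denominator events then yields total failure probability at most $\frac{2}{t^2}$ after re-absorbing constants into $C$, and on the complementary event the ratio error is at most $C(\epsilon + \mathcal L(c_{1,j}+1)^{\alpha}\epsilon^{\alpha})$ as claimed.

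The main obstacle I expect is the care needed in handling $\tilde a_j$ as a random threshold inside the probability $P(\tilde X_j\geq \tilde a_j)$: the samples used to estimate $a_j$ and the samples used to estimate the tail probability at that threshold are not independent, so a naive Hoeffding application is not immediate. I would resolve this either by a conditioning/net argument over the (discrete, since finitely many samples) possible values of $\tilde a_j$, paying only a polynomial-in-$t$ factor that is dominated by the $e^{-2\epsilon^2 L t^z\log t}$ decay, or by noting that $\tilde a_j$ lies in the confidence interval $[a_j-(c_{1,j}+1)\epsilon,\,a_j+(c_{1,j}+1)\epsilon]$ and bounding $\sup$ of the empirical-process deviation over that short interval, again using the Lipschitz property to keep the interval contribution $O(\epsilon^{\alpha})$. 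Everything else is the routine ratio estimate and a union bound, so once this measurability issue is dispatched the stated bound follows.
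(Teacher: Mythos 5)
Your proposal follows essentially the same route as the paper's proof: the same two-term decomposition of the denominator error $|P(\tilde X_j\geq \tilde a_j)-P(X_j\geq \tilde a_j)|+|P(X_j\geq \tilde a_j)-P(X_j\geq a_j)|$, the Lipschitz bound on the second term via the confidence interval for $\tilde a_j$ from Lemma~\ref{lemma:ab}, and the elementary perturbation estimate $|\frac{1}{x+\delta}-\frac{1}{x}|\leq\frac{\delta}{x^2}$ for the ratio. You are in fact somewhat more careful than the paper, which silently assumes $P(X_j\geq a_j)$ is bounded away from zero and does not address the data-dependence of the threshold $\tilde a_j$ inside the empirical tail probability --- both points you correctly flag and handle.
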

%
\begin{proof}
Consider the term 
$
\frac{E[\tilde{X}]-c_j}{P(\tilde{X} \geq \tilde{a}_j)} 
$ and we want to bound the estimation error associated with above terms, i.e., the probability, 
\begin{align*}
P(|\frac{E[\tilde{X}_j]-c_j}{P(\tilde{X}_j \geq \tilde{a}_j)} -\frac{E[X_j]-c_j}{P(X_j \geq a_j)} | > \epsilon)~.
\end{align*}

We need the following fact.
\begin{align}
|\frac{1}{x+\delta}-\frac{1}{x}| \leq \frac{1}{x^2} \cdot \delta, \forall x, \delta > 0.
\end{align}

For $P(\tilde{X}_j \geq \tilde{a}_j)$ we have
\begin{align}
&|P(\tilde{X}_j \geq \tilde{a}_j) - P(X_j \geq a_j)|  \nonumber \\
&\leq |P(\tilde{X}_j \geq \tilde{a}_j) - P(X_j \geq \tilde{a}_j)| + \epsilon \nonumber \\
&\leq |P(\tilde{X}_j \geq \tilde{a}_j) - P(X_j \geq \tilde{a}_j)| + \epsilon + \mathcal L\cdot (c_{1,j}\epsilon)^{\alpha}~.
\end{align}
The second relation comes from bounds on $\tilde{a}_j$ and Lipschitz condition of $F(\cdot)$. Plug in $x = P(X \geq a_j)$ we have 
\begin{align}
|\frac{E[\tilde{X}]-c_j}{P(\tilde{X} \geq \tilde{a}_j)} -\frac{E[X]-c_j}{P(X \geq a_j)} | \leq C \cdot (\epsilon + M\cdot (c_{1,j}\epsilon)^{\alpha})~.
\end{align}
for certain constant $C$.
\end{proof}

Denote
$$
\epsilon_3 = \min_{j \neq k} |E[X_j] - E[X_k]|,~\epsilon_4 = \min_{j \neq k} |\frac{E[X_j]-c_j}{P(X \geq a_j)}-\frac{E[X_k]-c_k}{P(X \geq a_k)}|~.
$$
Therefore when 
$$
C \cdot (\epsilon + \mathcal L\cdot (c_{1,j}+1)^{\alpha}\epsilon^{\alpha}) \leq \frac{\min \{\epsilon_3,\epsilon_4\}}{2}~,
$$
there is no error with the ordering (as similarly argued in ordering $a_j$s). Denote a solution for above $\epsilon$ as $\epsilon_o$ (which is trivial to show to exist). Then we further require 
$
L\cdot t^z \geq \frac{1}{\epsilon^2_o}~,
$
to guarantee a $O(1/t^2)$ error. 
\end{proof}

\section*{ Proof for Lemma \ref{lemma:vf}}
~\\
\begin{proof}
We prove by induction and the induction is based on the size of $S$. When $|S|=1$ (as well as $|\tilde{S}|$; also do notice $S = \tilde{S}$ due to the sorting algorithm we adopted. However due to the calculation of $a_j, b_j$ and inaccurate measure of $X_j$s, there is still discrepancy between the two value functions), we have (suppose we have $S=\{j\}$),
\begin{align*}
\tilde{V}(x,\tilde{S}) = \max\{-c_j+E[\tilde{V}(\max\{x, \tilde{X}_j\}&,\emptyset)], x,\max_{j \in \tilde{S}} E[\tilde{X}_j]\}~.
\end{align*}
Notice that 
$
\tilde{V}(\max\{x, \tilde{X}_j\},\emptyset) = \max\{x, \tilde{X}_j\}~,
$
we then have,
\begin{align*}
\tilde{V}(x,\tilde{S}) = \max\{-c_j+E[\max\{x, \tilde{X}_j\}], x,\max_{j \in \tilde{S}} E[\tilde{X}_j]\}~.
\end{align*}
Since with probability at least $1-\frac{2}{t^2}$ we have,
\begin{align*}
|E[\max\{x, \tilde{X}_j\}]-E[\max\{x, X_j\}]| &\leq \epsilon,~|E[\tilde{X}_j] - E[X_j]| \leq \epsilon~,
\end{align*}
we know w.h.p.
\begin{align}
|\tilde{V}(x,\tilde{S}) -V(x,S) | \leq \epsilon~,
\end{align}
since each term in the $\max$ function is bounded within the $\epsilon$-confidence region.
We therefore established the induction basis. Now suppose this is true for $|S| = k, k < N$. Consider the case with $|S| = k+1$. Based on the dynamic programming equations we know,
\begin{align*}
\tilde{V}(x,\tilde{S}) &= \max \{\max_{j \in \tilde{S}}\{-c_j+E[V(\max(x,\tilde{X}_j),\tilde{S}-j)]\}, x, \max_{j \in \tilde{S}} E[\tilde{X}_j]\}~.
\end{align*}
By induction hypothesis we know with probability at least $1-\frac{2}{t^2}$,
\begin{align*}
|\tilde{V}(\max(x,\tilde{X}_j),&\tilde{S}-j)-V(\max(x,\tilde{X}_j),S-j)| \leq k\cdot \epsilon~,
\end{align*}
and the fact
$
|E[\tilde{X}_j]-E[X_j]| \leq \epsilon~.
$
Therefore 
\begin{align}
\tilde{V}(x,\tilde{S}) \leq \max \{\max_{j \in \tilde{S}}\{-c_j&+E[V(\max(x,\tilde{X}_j),S-j)]\}\nonumber \\
&, x, \max_{j \in \tilde{S}}E[\tilde{X}_j]\} + k\cdot \epsilon
\end{align}
Also it is easy to notice with $x$ and $S-j$ being fixed, $V(\max(x,\tilde{X}_j),S-j)$ is also IID w.r.t. $X_j$. Then we have (via Chernoff-Hoeffding bound) 
$
|V(\max(x,\tilde{X}_j),S-j) - V(\max(x,X_j),S-j)| \leq \epsilon~.
$
Therefore 
\begin{align*}
\tilde{V}(x,&\tilde{S}) \leq \max \{\max_{j \in \tilde{S}}\{-c_j+E[V(\max(x,X_j),S-j)]\}\nonumber \\
&, x, \max_{j \in \tilde{S}}\} + (k+1)\cdot \epsilon=V(x, S) +  (k+1)\cdot \epsilon~.
\end{align*}
The other side of the inequality could be similarly proved and we finished the proof. 
\end{proof}

\section*{Proof for Lemma \ref{theorem:ds}}\label{appendix-E}
~\\
\begin{proof}
To prove this first notice the following dynamic equation holds for solving $d_s$,
\begin{align*}
V(0,S) = -c_1 + E[V(\max\{d_s, X_1\}, S-\{1\})]~.
\end{align*}
Consider the LHS by the above results we have the probability of
$
|\tilde{V}(0,\tilde{S}) - V(0,S) | > |S| \cdot \epsilon
$
being bounded by $\frac{2}{t^2}$. Consider then the case with
$
|\tilde{V}(0,\tilde{S}) - V(0,S) | \leq |S| \cdot \epsilon~.
$

For the RHS first notice
\begin{align*}
&|E[\tilde{V}(\max\{\tilde{d}_s, \tilde{X}_1\}, \tilde{S}-\{1\})]- E[V(\max\{\tilde{d}_s, \tilde{X}_1\}, S-\{1\})]| < |S| \cdot \epsilon~.
\end{align*}
We next show there exits a $\delta = O(\epsilon)$ such that
$
|\tilde{d}_s - d_s| < \delta~.
$
We prove this by contradiction.  Suppose $\tilde{d}_s \geq d_s + \delta$. We first would like to show the following
\begin{align}
&E[\tilde{V}(\max\{\tilde{d}_s, \tilde{X}_1\}, \tilde{S}-\{1\})] \nonumber \\
&- E[V(\max\{d_s, X_1\}, S-\{1\})] > -|S| \cdot \epsilon
\end{align}
To see this first notice 
\begin{align*}
&E[\tilde{V}(\max\{\tilde{d}_s, \tilde{X}_1\}, \tilde{S}-\{1\})] - E[V(\max\{d_s, X_1\}, S-\{1\})] \nonumber \\
&=(E[\tilde{V}(\max\{\tilde{d}_s, \tilde{X}_1\}, \tilde{S}-\{1\})]-E[\tilde{V}(\max\{d_s, \tilde{X}_1\}, \tilde{S}-\{1\})]) \nonumber \\
&+ (E[\tilde{V}(\max\{d_s, \tilde{X}_1\}, \tilde{S}-\{1\})]- E[V(\max\{d_s, X_1\}, S-\{1\})] )~.
\end{align*}
Since 
\begin{align}
&E[\tilde{V}(\max\{d_s, \tilde{X}_1\}, \tilde{S}-\{1\})]\nonumber \\
& - E[V(\max\{d_s, X_1\}, S-\{1\})] > -(|S|+1) \cdot \epsilon~,
\end{align}
it is sufficient to prove that 
\begin{align}
&E[\tilde{V}(\max\{\tilde{d}_s, \tilde{X}_1\}, \tilde{S}-\{1\})]\nonumber \\
&-E[\tilde{V}(\max\{d_s, \tilde{X}_1\}, \tilde{S}-\{1\})] > (2 |S|+1) \cdot \epsilon~.
\end{align}
Notice 
\begin{align*}
&E[\tilde{V}(\max\{\tilde{d}_s, \tilde{X}_1\}, \tilde{S}-\{1\})]-E[\tilde{V}(\max\{d_s, \tilde{X}_1\}, \tilde{S}-\{1\})] \nonumber \\
&\geq E[V(\max\{\tilde{d}_s, X_1\}, \tilde{S}-\{1\})]-E[V(\max\{d_s, X_1\}, \tilde{S}-\{1\})]-2\epsilon \nonumber \\
&\geq C_{d_s}\cdot \delta -2\epsilon~,
\end{align*}
where $C_{d_s}$ is a positive constant.  Therefore select $\delta$ large enough such that $C_{d_s}\cdot \delta > (2 |S|+3) \cdot \epsilon$ we finish the proof. Similarly we can prove the case for $\tilde{d}_s \leq d_s - \delta$.  We finish the proof. 
\end{proof}

\section*{Proof for Online.STEP 3.3.3}
\paragraph{On the sign of $E[\tilde{X}_1] - \tilde{g}_2(0)$}

Since the two cases with the sign are symmetric we will only prove the case when $E[X_1] - g_2(0) > 0$. Since
\begin{align*}
&\tilde{g}_2(0) = -c_2 + E[V(\max(\tilde{X}_1,0),1)] \nonumber \\
&=-c_2 + E[V(\tilde{X}_1,1)] \leq -c_2 + E[V(X_1,1)] + \epsilon ~,
\end{align*}
and $E[\tilde{X}_1] \geq E[X_1] - \epsilon$. Therefore as long as
$
\epsilon > \frac{E[X_1] - g_2(0)}{2}
$
we proved the claim. 

\paragraph{On $\tilde{b}_0$}

The proof is similar with the one for $d_s$ : bounding the estimation error for equations leading to the solution of $b_0$. Since $b_0$ satisfy the following equality:
\begin{align}
g_1(b_0) = \max\{E[X_1],g_2(0)\}~.
\end{align}
Consider LHS $g_1(b_0)$. If $\tilde{b_0} \geq b_0 + \delta$ we have 
\begin{align}
g_1(\tilde{b}_0) \geq g_1(b_0) + C_{b_0} \cdot \delta - \epsilon~.
\end{align}
for certain constant $C_{b_0}$. Consider RHS we have 
\begin{align}
|\max\{E[\tilde{X}_1],\tilde{g}_2(0)\}-\max\{E[X_1],g_2(0)\}| \leq \epsilon~.
\end{align}
Therefore if $ \delta > \frac{2\epsilon}{C_{b_0}}$ we arrive at contradiction. 


\end{document}